\newtheorem{userdefined}{Theorem}
\title{Sampling from Pre-Images to Learn Heuristic Functions for Classical Planning}
\author {
    Stefan O'Toole\textsuperscript{\rm 1},
    Miquel Ramirez\textsuperscript{\rm 2},
    Nir Lipovetzky \textsuperscript{\rm 1},
    Adrian R. Pearce \textsuperscript{\rm 1}
}
\date{}
\begin{document}

\maketitle

\begin{abstract}
We introduce a new algorithm, \emph{Regression based Supervised Learning} (RSL), for learning per instance Neural Network (NN) defined heuristic functions for classical planning problems. RSL uses regression to select relevant sets of states at a range of different distances from the goal. RSL then formulates a Supervised Learning problem to obtain the parameters that define the NN heuristic, using the selected states labeled with exact or estimated distances to goal states. Our experimental study shows that RSL outperforms, in terms of coverage, previous classical planning NN heuristics functions while requiring two orders of magnitude less training time.
\end{abstract}

\section{Introduction}
Heuristics for automated planning can be formulated following a number of approaches. 
Heuristics such as Fast-Forward (FF)~\cite{hoffmann2001ff}, $h\textsuperscript{max}$ and $h\textsuperscript{add}$~\cite{bonet1997robust} follow from analyzing the structure of
many planning instances, and coming up with a mathematical framework to  automatically compute
functions that capture important structural information about instances from the symbolic 
descriptions of causal laws (actions) and domain constraints~\cite{helmert2006fast}. Alternatively, Pattern database~\cite{edelkamp2014planning} and \emph{merge-and-shrink}
~\cite{helmert2007flexible} heuristics are defined as generic functions that evaluate states
by projecting them onto many smaller sub-problems, that are solved optimally, and combining
their solutions in some specific way. These are chosen on a per-instance basis,
and typically involve solving a discrete optimization problem to select which projection
is deemed to be most informative according to some suitably defined criterion. 
A third approach has attracted attention recently, where heuristic functions are searched for in
a family of functions described by a Neural Network
(NN)~\cite{ferber:2021:prl,ferber2020neural,shen2020learning}. These efforts are mainly driven by
the suggestive results in Computer Vision and Reinforcement
Learning of novel, general, and scalable stochastic algorithms for convex 
optimization to select NN parameters that
minimize a suitably defined notion of \emph{empirical risk}~\cite{kingma2014adam,goodfellow2016deep,hardtrecht}.
Currently, the most successful methods for learning NN heuristics for classical planning problems require vast amounts of computational resources for training and are usually outperformed by heuristics 
that do not 
require any offline training time~\cite{ferber:2021:prl}.

In this work, we introduce the \emph{Regression based Supervised Learning} (RSL) algorithm in order to learn per instance NN defined heuristic functions. Like other methods~\cite{ferber:2021:prl,yu2020learning} do,
RSL selects a set of \emph{regressions}, trajectories of \emph{sets of states}, or \emph{pre-images},
found via the
application of well-known and efficient pre-imaging
operators~\cite{rintanen2008regression} that rely on symbolic action descriptions. These 
trajectories found along a given regression, always start from the set of goal states of an instance, and then
training states are sampled from each set along the trajectory.
Our method takes many samples from each pre-image found in a regression, instead of performing many trajectories or longer regressions to increase the number of training states for the NN,
using the observed goal distances for each pre-image to label the sampled
states.

Through a sensitivity analysis over RSL's hyper-parameters we explore the impact of its key mechanisms. We also benchmark the NN heuristics learnt by RSL against existing methods to provide insight into the ability of RSL to learn effective heuristics.  
The paper's contributions are threefold: (1) introducing a new method for training NN defined heuristic functions, (2) introducing a new novelty measure for a regression based search, and (3) an experimental analysis of RSL's hyper-parameters and components.

\section{Background}
\textbf{Classical Planning}. We consider classical planning problems as defined by the STRIPS formulation~\cite{fikes1971strips}. STRIPS defines a planning problem as the tuple $\Pi = \langle F,O,I, G\rangle$, where $F$ is a set of atoms, $O$ is a set of actions, $I\subseteq F$ is the initial state and $G\subseteq F$ is the goal set. Each action $a \in O$ is represented by the tuple $\langle Add(a), Del(a), Pre(a) \rangle$ which are each a set of atoms over $F$. $Add(a)$ and $Del(a)$ describe the atoms that are added and removed from the state respectively, and $Pre(a)$ describes the atoms that must be true in a state in order to apply action $a$. The STRIPS problem $\Pi$ implicitly represents in a compact form a deterministic transition system~\cite{geffner2013concise}. The classical planning state-transition model for progression is $\mathcal{S}(\Pi) = \langle S, s_0, S_G, A, f, c\rangle$, where $S \subseteq 2^F$, $s_0$ is the initial state $I$, $S_G$ is the set of goal states described as the set $\{ s \  | \ s \supseteq G, s \in S\}$, the actions $a \in A(s)$ are the actions in $O$ that are applicable in $s$, that is $Pre(a) \subseteq s$, $f$ is the transition function where for action $a$ and state $s$, the resulting state is $s' = f(a,s) = (s\setminus Del(a)) \cup Add(a)$, and finally $c(a,s)$ is the cost of selecting the transition out of $s$ via action $a$. The problems we consider in this paper all have a uniform cost $c(a,s)=1$.  A solution for a classical planning problem is a sequence of actions $a_0$,\ldots,$a_n$ that select transitions connecting the initial state $s_0$ to a state within the goal set $S_G$. The progression state-transition model can be used to find a solution through a forward search for a goal state from $s_0$.

\textbf{Heuristics}. The objective of the heuristics introduced in this paper is to approximate the optimal value function $V^*$ as described through the Bellman optimality equation~\cite{bellman:57},
\begin{align}
	\label{eq:Bellman_Equation}
	V^{*}(s) = \min_{a \in A(s)} \bigg[ c(a,s) + V^*(f(a,s))\bigg]
\end{align}
for all states $s \not\in S_G$ and $V^{*}(s) = 0$ for $s \in S_G$. Similar to many of the existing heuristics such as $h\textsuperscript{FF}$ the heuristics introduced in this paper are not guaranteed to be lower or upper bounds on the optimal value function.

\textbf{Regression}. Planners can also search backwards from the goal through the regression state-transition model~\cite{geffner2013concise}. The regression state-transition  model is $R(\Pi) = \langle S, s_0, S_G, A, f, c\rangle$, where $s_0$ is the partially assigned state $G$, the goal set $S_G$ is the set $\{ s \ | \ s \subseteq I, s \in S\}$, $A(s)$ are the actions in $O$ that are relevant and consistent for the partial state $s$, that is $Add(a) \cap s \not = \emptyset$ and $Del(a) \cap s = \emptyset$, and the state transition function to pre-image state $s$ is $f(a,s) = (s \setminus Add(a)) \cup Pre(a)$. A key difference between the regression and progression state-transition models is that the regression model searches over partial truth assignments, which represent sets of states as opposed to complete truth-assignments. For the progression state-transition model every atom not in a state is false, while for the regression state-transition model the atoms not in a state are simply undefined. From this point forward we refer to such ``partial states'' as \emph{pre-images} and denote them as $x$ to distinguish them from complete truth-assignment state denoted as $s$. The regression model is required to search over pre-images $x \subseteq F$ as the initial pre-image $x_0$ coincides with the set of goal
states from the progression state-transition model $S_G$. Regression operators also compute \emph{weaker preconditions} so the only 
atoms being asserted in a pre-image $x$ are those in the precondition $Pre(a)$, while retracting
any commitments on the truth value of atoms in $Add(a)$. Conversely, the progression state model will always search over full states $s \in \mathcal{S}(\Pi)$ as $I$ prescribes the truth value of every
atom, and the progression transition function $f(a,s)$ provides an explicit mapping between states 
$s, s' \in \mathcal{S}(\Pi)$. 

\textbf{Supervised Learning}. In this paper we take advantage of the pre-images explored by a regression search in order to sample a set of training states labelled with approximations $\tilde{V}$ of $V^*$ (Equation~\ref{eq:Bellman_Equation}),  to learn a heuristic defined through a NN using a Supervised Learning (SL) algorithm.

SL is a common approach for creating predictors from data samples through empirical risk minimisation~\cite{hardtrecht} with a goal of generalising to out-of-sample data. That is, the objective of SL given a function class $\mathcal{H} \subseteq \mathcal{X}\to\mathcal{Y}$ and a set of data samples $(\bar{x}_1, y_1), (\bar{x}_2, y_2), \ldots, (\bar{x}_i, y_i)$ is defined as,
\begin{align}
\label{eq:empirical_risk_min}
    \min_{h \in \mathcal{H}}\frac{1}{n} \sum_{i=1}^n {\mathit{loss}}( h(\bar{x}_i), y_i )
\end{align}
In this paper, ${\cal H}$ is a NN, each $h \in {\cal H}$ is a unique set of NN parameters $\theta$,
training examples $\bar{x}_i$ are states $s$, and labels $y_i$ are cost-to-go estimates $\tilde{V}$ between $s$ and
$S_G$.

\textbf{Per-instance heuristics}. Ferber et al.~\shortcite{ferber:2021:prl} describe two different methods for learning heuristics for classical planning problems. The first is \emph{per domain} learning, where a heuristic is learnt that is applicable to any instance of a particular domain, where a domain defines a set of action schemas and predicates which are used to formulate instances $\Pi$. Our paper focuses on the second framework introduced by Ferber et al. (2021), which is to learn instance-based heuristics. That is, given some instance $\mathcal{S}(\Pi)$ with initial state $s_0$, we seek heuristics $h$ that apply to instances $\mathcal{S}(\Pi_1)$, $\mathcal{S}(\Pi_2)$, \ldots, $\mathcal{S}(\Pi_i)$ where all the instances $\mathcal{S}(\Pi_i)$ share all structural elements with $\mathcal{S}(\Pi)$ but initial states. That is, each $\mathcal{S}(\Pi_i)$ features a distinct initial state $s^i_0$, which is reachable from $s_0$ in $\mathcal{S}(\Pi)$. Our objective in this work is to efficiently, in terms of computation time, create a data sample of states labelled with $\tilde{V}$ such that an optimisation of the NN parameters $\theta$ using Equation~\ref{eq:empirical_risk_min} generalises to the population set of states $s^i_0$ reachable from $s_0$ .

\section{Related Work}
Ferber et al.~\shortcite{ferber2020neural} introduce a method, Teacher-based Supervised Learning (TSL), that constructs a training set out of states found in the trace of the execution of plans from a teacher planner. TSL performs 200 step random walks from the initial state of an instance and then uses a Greedy Best First Search (GBFS) with $h$\textsuperscript{FF} to search for a plan. The states along the solution path (if found) are labeled with an estimate of their distance from the goal according to the solution. TSL then performs Supervised Learning (SL) on the states and goal-distance estimates to learn a heuristic. 

Yu et al.~\cite{yu2020learning} introduced an alternative SL method for learning per-instance heuristic functions. Yu et al.'s algorithm (SING) performs backwards depth-first searches (DFS) with fixed node expansion budgets, to search from fully assigned randomly sampled goal states $s_g \in S_G$. SING then uses SL on states visited within the DFS labeled with the depth at which the states were visited for an estimate of their goal distance. 

The algorithm we introduce in this work differs from both TSL and SING in a number of ways. First, RSL samples training states through a regression over partially assigned states starting from the goal rather than a DFS starting at a fully assigned goal state (SING) or a forward search from the instance's initial state (TSL). Second, RSL additionally samples random states that are added to the training set. Third, for goal distance estimates TSL uses a teacher planner and SING uses the depth at which the state was visited while RSL uses the tightest upper bound found for the state's goal distance derived through the pre-images visited by a number of regressions. Last, as RSL searches and labels goal distance estimates over pre-images, it samples many different training states from each pre-image.

More recently Ferber et al.~\shortcite{ferber:2021:prl} introduced three new NN heuristics. Two of the approaches introduced, $h$\textsuperscript{Boot} and $h$\textsuperscript{BExp}, are based on the idea of training a heuristic on successively harder to solve states~\cite{arfaeebootstrap}. $h$\textsuperscript{Boot} and $h$\textsuperscript{BExp}, perform regressions from the goal following random walks, and solve using GBFS with the current $h$\textsuperscript{Boot} or $h$\textsuperscript{BExp} heuristic from a fully assigned state randomly sampled from the pre-image discovered by the regression. 
$h$\textsuperscript{Boot} uses the plan's length while $h$\textsuperscript{BExp} uses the number of states expanded by GBFS as the state training labels for the estimated goal distance.
$h$\textsuperscript{Boot} and $h$\textsuperscript{BExp} use the states and labels to iteratively train their NNs, and once the GBFS is solving 95\% of the states found from the regressions, the maximum length of the regression is doubled. 
Ferber et al.~\shortcite{ferber:2021:prl} also introduced the $h$\textsuperscript{AVI} heuristic which is trained using approximate value iteration. $h$\textsuperscript{AVI}, also discovers states through regressions but instead of solving for the state, $h$\textsuperscript{AVI} performs Bellman updates on a 2 step lookahead from the state, evaluating the leaf states of the lookahead with the current $h$\textsuperscript{AVI} heuristic or as 0 if they are goal states. Ferber et al.'s motivation for these Active Learning (AL) approaches verse SL ones is that SL is limited to instances small enough for training data generation.

Beyond learning per-instance NN defined heuristics, Shen et al.'s~\shortcite{shen2020learning} STRIPS Hypergraphs Networks (HGNs) learns per-domain and even domain independent NN defined heuristics. 
In this work we do not focus on domain independent NN defined heuristic functions, but we do compare against a STRIPS-HGN heuristic trained in a per-domain framework as described by Ferber et al.~\shortcite{ferber:2021:prl}. 

In this paper we also explore a version of the RSL algorithm that aims to maximise the structural diversity of the states in its selected sample. Width-based planning is one method that has been particularly effective in increasing the structural diversity of states considered during search~\cite{lipovetzky:12:width,lipovetzky:15:atari,frances2017purely,katz2017adapting}. Width-based planning prunes states considered as non-novel by a defined measure. The original width-based measure introduced by Lipovetzky et al.~\shortcite{lipovetzky:12:width} is that the novelty of a state is evaluated by the smallest tuple of atoms $t \subseteq s$, where $s$ is the first state that makes $t$ true in the search. Inspired by the success of the width-based novelty algorithms in classical planning, both in progression and regression \cite{lei2021width}, we create and experiment with a new novelty based regression algorithm.

\section{Regression Based Supervised Learning}
Given a planning problem $\Pi = \langle F,O,I,G\rangle$, RSL produces a training set ${\cal{D}} = \{(s_1, h_1), \dots (s_N, h_N)\}$ which is a set of states $s \in S$ paired with goal distance estimates $h$. To produce ${\cal{D}}$ RSL performs $N_r$ rollouts, each starting at the goal $G$ and applying $L$ times the classical planning regression operator. Each rollout from $G$ is a sequence of actions $\pi^j = (a^j_i)_{i=0}^{L-1}$, for $j = 1, \ldots, N_r$. Each $\pi^j$ produces a sequence of pre-images $x^j_{0}, x^j_{1}, \ldots, x^j_{L}$, where $x^j_0=G$ and $x^j_i \subseteq F$. The sequence of pre-images denotes a sequence of sets of states $\mathcal{R}^j = X^j_{0}, X^j_{1}, \ldots, X^j_{L}$, where $X^j_i = \{ s \ | \ x^j_i \subseteq s, s \in S\}$. See Figure~\ref{fig:novelty_def} for an example of this mapping of a pre-image represented by a partial truth assignment into its corresponding set of fully assigned states. By the definition of the regression operator, $X^j_{i-1}$ corresponds with the \emph{pre-image}, conditioned on $a^j_{i-1}$, of $X^j_i$. Therefore any state within $X^j_{i}$ can be reached from $X^j_{i-1}$ by applying action $a^j_{i-1}$. It follows that any state within $X^j_{i}$ can reach $X^j_{0}$ in at most $i$ transitions. Using this observation, RSL labels each state $s$ within its training set of states, $T_s$, with
\begin{align}
d(s)=\mbox{min}(i \ | \ s \in X^j_i, j \in \{1, \ldots, N_r\})
\label{eq:rsl_label}
\end{align}
that is, the smallest goal distance estimate of any of the state sets visited by the regressions which $s$ is also a member of.

\begin{algorithm}
 \caption{Overview of the RSL Algorithm}
\label{alg:RSL}
\textbf{Input}: $\Pi$\\
\textbf{Parameter}: $P_r, L, N_r, N_t, \mu$\\
\textbf{Output}: $h$\textsuperscript{RSL}
\begin{algorithmic}[1] 
\STATE $\mathcal{R}\gets\textsc{regression}(\Pi, L, N_r,\mu$)
\STATE $T_s \gets \textsc{sample\_states}(\mathcal{R}, P_r, N_t)$
\STATE $\cal{D} \gets \textsc{label}(\mathcal{R}, T_s)$
\STATE $h$\textsuperscript{RSL}$\gets \textsc{supervised\_learning}(\cal{D})$
\end{algorithmic}
\end{algorithm}

Algorithm~\ref{alg:RSL}, 
provides an overview of RSL. The hyper-parameters of RSL are the length of each regression $L$, the number of regressions to perform $N_r$, the number of training states to use $N_t$, the percentage of training states that are randomly sampled from the entire state space $P_r$, and a function that maps state regression trajectories into novelty levels $\mu$. RSL has three distinct steps, 1: extracting sets of state sets, $\mathcal{R} = \bigcup_{j=1}^{N_r}\mathcal{R}^j$, through performing regressions from the goal set (Alg.~\ref{alg:RSL} line 1), 2: sampling training states $T_s$ and labeling them with goal distance estimates using \eqref{eq:rsl_label} (Alg.~\ref{alg:RSL} lines 2-3), and 3: training the NN defined heuristic function (Alg.~\ref{alg:RSL} line 4) using empirical risk minimisation~\eqref{eq:empirical_risk_min}. 

\begin{figure}[t!]
    \centering
    \includegraphics[width=0.47\textwidth]{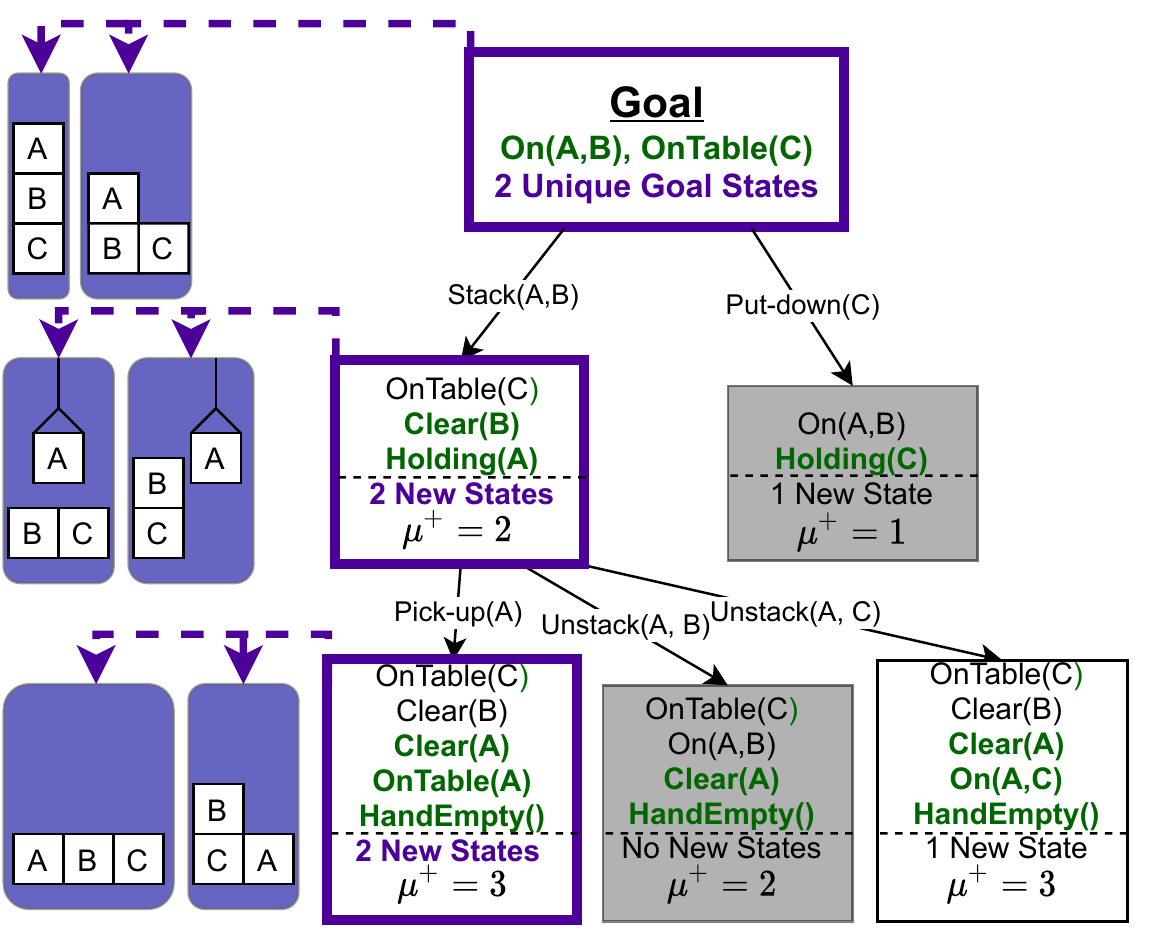}
    \caption{An example rollout performed by N-RSL with $L=2$ on a 3 block Blocksworld problem. Each square represents a per-image through a partially assigned state with its assigned atoms written within the square. The purple squares represent the example rollout trajectory selected by N-RSL and the possible Blocksworld states for each pre-image along the trajectory are shown on the left.  Green atoms are atoms assigned for the first time in the trajectory, and the $\mu^+$ value is calculated using Equation~\ref{nov:adds}.}
    \label{fig:novelty_def}
\end{figure}

\subsection{Extracting state sets through regression}
As previously explained RSL performs $N_r$ regressions to extract the set of state sets $\mathcal{R}$ over which the training set $\mathcal{D}$ is defined. 
At step $i$ of a rollout with a pre-image $x^j_{i}$ corresponding to the set of states $X^j_{i}$, as previously defined, the actions we consider valid for pre-imaging $X^j_{i}$ with are $a \in \nu(x^j_{i})$, defined as follows
\begin{align}
\begin{aligned}
\label{available_actions}
\nu(x^j_{i}) = \{ a \ &|\;a \in \textsc{reachable}(O, I), \\
&\;x^j_i \cap \mbox{e-Del}(a) = \emptyset, \\
&\;x^j_{i} \cap \mbox{Del}(a) = \emptyset, x^j_{i} \cap \mbox{Add(a)} \neq \emptyset \}
\end{aligned}
\end{align}

\noindent and e-Del$(a)$ is
\begin{align}
\begin{aligned}
\mbox{e-Del}(a) = \{ \ q \ |& \  q \in F \setminus Add(a), \\&\exists{p \in Pre(a)}: \textsc{mutex}(p,q) \}.
\label{eq:eDel}
\end{aligned}
\end{align}
$\textsc{reachable}(O, I)$ maps the operator set $O$ and initial state $I$ to the set of actions with reachable preconditions in the delete relaxation of $\Pi$~\cite{BONET20015} given the initial state of the progression state-transition model $I$. The $\textsc{mutex}(p, q)$ function in (\ref{eq:eDel}) maps the pair of atoms ($p$, $q$) to true if $p$ and $q$ are mutually exclusive, that is, it is impossible for $p$ and $q$ to both be true in any state $s \in S$ that can be reached from $I$. Note that the Ferber et al.~\shortcite{ferber:2021:prl} algorithms that use regression also filter the valid actions in the same way through using the mutex groups and applicable operations found by the Fast Downward (FD)~\cite{helmert2006fast} Translator\footnote{See the \textsc{is\_valid\_walk} parameter in line 848 in code/src/search/task\_utils/sampling\_technique.cc in Ferber et al.'s (2021) Supplementary Material available at \url{https://zenodo.org/record/5345958}  }.   

The baseline option for performing the rollout, and the method used by Ferber et al.~\shortcite{ferber:2021:prl}, is to randomly select actions $a$ for which applying the regression operator is valid. In addition to testing RSL using random action selection we instantiate a version of RSL we name Novelty guided Regression based Learning (N-RSL) that aims to increase the structural diversity of operators selected in its regression. 

\subsubsection{Preferring actions with novel preconditions}
For N-RSL at step $i$ of a rollout with a pre-image $x^j_{i}$, and a state trajectory of $\tau = G, x^j_{1}, \ldots, x^j_{i}$, N-RSL uses the novelty function $\mu$ to select the action $a_i$ to pre-image $X^j_{i}$.
\begin{align}
\begin{aligned}
a_i = \mbox{argmax}_a\{\mu(a, \tau) \ | \ a \in \nu(x^j_{i})\}
\label{eq:actionSel}
\end{aligned}
\end{align}
where ties are broken randomly and $\nu(x^j_{i})$ is a set of valid actions as described above. Note that for the plain RSL algorithm, which uses random action selection, $\mu=\mu(a, \tau) = 0$ for any $a \in O$ and any state trajectory $\tau$.

In order to increase the structural diversity of the states in sets $X^j_i$ in ${\cal R}^j$ we consider how the pre-images $x^j_i$ evolve as we repeatedly apply the regression operator. At each step $i$ of the regression we have a set of valid operators $\nu(x^j_{i})$ and a state set $X^j_{i}$ derived from the pre-image $x^j_i \subseteq F$, where $x^j_0=G$. We propose increasing the structural diversity of the sets extracted by counting the number of atoms an action will assign as true in a pre-image for the first time in the current trajectory. In order to apply this criterion, $\mu$ in \eqref{eq:actionSel} is replaced by $\mu^{+}$ below,
\begin{align}
\begin{aligned}
\mu^{+}(a, \tau) = |Pre(a) \setminus \bigcup_{x^j_i \in \tau} x^j_i|
\label{nov:adds}
\end{aligned}
\end{align}
This $\mu^{+}$ measure means N-RSL prefers actions with pre-conditions which contain atoms that are not specified in the goal $G$ and are not a member of the pre-condition set of any of the actions executed in the trajectory up until that point. Note that $\mu^{+}$ relies only on the current trajectory and is independent of any previously executed regression trajectories. 
Figure~\ref{fig:novelty_def} shows an example of a rollout performed on a simple Blocksworld problem with the one-step lookahead method described by \eqref{eq:actionSel} using the measure $\mu^{+}$ defined in \eqref{nov:adds}.

\subsection{Sampling and Labeling Training Data}
The training data for RSL is sampled from the sets of states $\mathcal{R}$.
States are sampled from each set $X_{i}^{j} \in {\cal R}$, and sampled states that contain mutex atom pairs $(p,q)$ are modified by removing either the $p$ or $q$ atom from the state. Note that if a sampled state from the set $X_{i}^{j}$ has a mutex pair $(p, q)$ and $p \in x_{i}^{j}$, $q$ will be removed from the state, that is, an atom that is a member of the partial state $x_{i}^{j}$ that a state is sampled from will never be removed from the state. As previously described, the labelled heuristic value for a sampled state is given by $d(s)$, which returns the distance of the closest state set in $\mathcal{R}$ to the goal according to the regression.

As we report later, some domains benefited from adding randomly sampled states from $S$. The random states have mutexes enforced using the same method as above, and are also labeled with $d(s)$. In the case that the state is not a member of any of the sets of states in $\mathcal{R}$, we define $d(s)$ to equal $L+1$.

\begin{userdefined}
Given a problem $\Pi=\langle F,O,I,G\rangle$, N-RSL obtains a training set $\cal{D}$ in $\mathcal{O}(|F|(|O|N_rL + N_tLN_r))$ time and $\mathcal{O}(|F|(N_rL + N_t))$ space.
\end{userdefined}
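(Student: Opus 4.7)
The plan is to bound the three sub-routines of Algorithm~\ref{alg:RSL} separately---the regression in line 1, the state sampling in line 2, and the labelling in line 3---and then take the sum. Line 4 is not part of the claim.

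First I would analyse the regression phase. Each of the $N_r$ rollouts performs $L$ regression steps. At step $i$, computing the valid-action set $\nu(x^j_i)$ of \eqref{available_actions} requires, for each $a \in O$, a constant number of set operations on subsets of $F$---intersections of $x^j_i$ with $Add(a)$, $Del(a)$ and $\mbox{e-Del}(a)$---each of which is $\mathcal{O}(|F|)$ using bit-vector encodings of length $|F|$. Evaluating $\mu^{+}(a,\tau)$ from \eqref{nov:adds} is also $\mathcal{O}(|F|)$ per candidate action, provided the union $\bigcup_{x^j_k \in \tau} x^j_k$ is maintained incrementally along the trajectory as a single bit-vector. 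The argmax in \eqref{eq:actionSel} is then linear in $|O|$, and applying the regression operator to produce $x^j_{i+1}$ costs $\mathcal{O}(|F|)$. Hence each step costs $\mathcal{O}(|F||O|)$, so the whole regression phase costs $\mathcal{O}(|F||O|N_rL)$ time and uses $\mathcal{O}(|F|N_rL)$ space to hold every pre-image. The precomputed $\textsc{reachable}(O,I)$ set and the pairwise $\textsc{mutex}$ relation produced by the FD translator do not enter the bound.

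Next I would handle sampling and labelling. Drawing a single fully-assigned state $s$ from a pre-image $X^j_i$ (including the mutex repair described in the Sampling section) takes $\mathcal{O}(|F|)$, contributing $\mathcal{O}(|F|N_t)$ over all $N_t$ samples. For each sampled $s$, the label $d(s)$ of \eqref{eq:rsl_label} is obtained by testing $s \supseteq x^j_i$ against the $N_rL$ stored pre-images in $\mathcal{R}$ and keeping the minimum index for which the test succeeds; each test is $\mathcal{O}(|F|)$, giving $\mathcal{O}(|F|N_t N_r L)$ in total. Random states drawn from $S$ are labelled the same way and fold into the same bound. Summing with the regression cost yields the claimed time bound $\mathcal{O}(|F|(|O|N_rL + N_tLN_r))$; adding the training-set storage to the stored pre-images yields the claimed space bound $\mathcal{O}(|F|(N_rL + N_t))$.

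The only non-routine point is showing that the novelty bookkeeping in N-RSL does not inflate the per-step cost beyond $\mathcal{O}(|F||O|)$: a na\"ive re-computation of $\bigcup_{x^j_k \in \tau} x^j_k$ for every candidate action would insert an extra $L$ factor and push the regression cost to $\mathcal{O}(|F||O|N_rL^2)$. Maintaining this union once per regression step as a single bit-vector is what preserves the stated bound, and this is the observation I would want to flag explicitly before composing the three sums.
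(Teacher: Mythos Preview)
Your proposal is correct and follows essentially the same decomposition as the paper's proof sketch: bound the regression phase as $\mathcal{O}(|F||O|N_rL)$ time and $\mathcal{O}(|F|N_rL)$ space, bound sampling and labelling as $\mathcal{O}(|F|N_tN_rL)$ time and $\mathcal{O}(|F|N_t)$ space, then sum. Your treatment is in fact more careful than the paper's, since you explicitly justify why evaluating $\mu^{+}$ stays $\mathcal{O}(|F|)$ per candidate action via incremental maintenance of the trajectory union---a point the paper's sketch simply asserts.
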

\begin{proof}[Proof sketch]
RSL performs $N_r$ rollouts and by definition the number of state transitions in each $\pi^j$ is $L$. For each state transition the one step lookahead (Equation~\ref{eq:actionSel}) considers a maximum of $|O|$ actions and $|O|$ pre-images.  
Therefore the total number of pre-images considered by RSL is $L \times |O| \times N_r$. Note that both $L$ and $N_r$ are set as constant hyper-parameters and for each pre-image a maximum of $|F|$ atoms need to be considered by the novelty definition $\mu^{+}$, therefore the $\textsc{regression}$ algorithm has a time complexity of $\mathcal{O}(|F||O|N_rL)$. Also for each pre-image a maximum of $|F|$ atoms need to be assigned, hence $\textsc{regression}$ runs in $\mathcal{O}(|F|N_rL)$ space. 
Sampling states requires the assignment of at most $|F|\times N_t$ atoms and checking a state's membership requires at most $|F|\times N_t \times L \times N_r$ comparisons meaning both the $\textsc{sample\_states}$ and $\textsc{label}$ algorithms run in $\mathcal{O}(|F||O|N_rL)$ time and $\mathcal{O}(|F|N_t)$ space.
\end{proof}

Given the training set, $\cal{D}$, any suitable off-the-shelf SL algorithm can be used to obtain a heuristic estimator through optimising the SL objective function (\ref{eq:empirical_risk_min}). We discuss 
the specifics of 
one such algorithm in our Experimental Study.

\section{Experimental Study}
The experimental study of RSL has two objectives: (1) evaluate the effect of the different mechanisms within the RSL algorithm, and (2) provide a direct comparison to existing NN and model-based heuristic functions.
\subsection{Methodology}
Our benchmark set of domains, instances and initial states is the same as Ferber et al.~\shortcite{ferber:2021:prl}. As we are learning per instance heuristics, a unique heuristic is trained for each problem instance and then evaluated over a set of 50 different initial states. The 50 initial states were produced by Ferber et al.~\shortcite{ferber2020neural,ferber:2021:prl}, through performing 50 $200$-step random-walks from the original initial state of the instance. The benchmark instances have also been separated by Ferber et al.~\shortcite{ferber2020neural,ferber:2021:prl} into ``Moderate Tasks", which are solved by GBFS guided by $h$\textsuperscript{FF} in less than 900 seconds but more than 1 second, and ``Hard Tasks" which are not solved within 900 seconds.

We evaluate $h$\textsuperscript{RSL} heuristic using the same method as Ferber et al.~\shortcite{ferber:2021:prl}. Each heuristic is evaluated over 50 different initial states guiding GBFS implemented in FD~\cite{helmert2006fast}. It is known that a lack of accuracy in heuristics can lead to blow ups in the size of search trees~\cite{helmert2008good}, therefore we can test the relative accuracy of the heuristics by using them in the GBFS and comparing their coverage. The coverage of a planner is defined as the percent of initial states for which a solution path is found within the given planning budget. Ferber et al.~\shortcite{ferber:2021:prl} report observing that in general the coverage superiority between the different NN heuristics tested did not vary over time. That is, the planning time used and the relative coverage superiority between the algorithms were not correlated. Given this observation and constraints on our available compute resources we reduce the overall 10 hour planning time-limit that Ferber et al. used by 99\% down to just 6 minutes, and compare with the NN defined heuristic baseline algorithms results as provided by Ferber et al. (2021) which use the 10 hour planning time-limit. 

As our results are run on different hardware and Ferber et al.'s algorithms use the Keras~\cite{chollet2015keras} and the TensorFlow~\cite{tensorflow2015} libraries to train and evaluate their NN while we use PyTorch~\cite{NEURIPS2019_9015}, we preformed a comparison of the evaluations per second used for commonly solved instances between Ferber et al.'s~\shortcite{ferber:2021:prl} methods and $h$\textsuperscript{N-RSL}. Ferber et al.'s algorithms use the same NN architecture meaning that for the same instance, and assuming the relationship between evaluations and planning time is linear, the evaluations per second should be similar. The data logs provided by Ferber et al. only provide the number of evaluations completed by FD if the problem is solved, therefore we can only compare the ratios of commonly solved instances.
Depending on the domain the ratios averaged between 0.3 to 16.1, meaning $h$\textsuperscript{RSL}'s evaluations per second were sometimes less but could also be up to 16.1 times higher. A full table of the minimum, maximum and average ratios found for each domain and algorithm can be found in the Supplementary Material.  
We also benchmark on our hardware an implementation of the SING~\cite{yu2020learning} algorithm which Ferber et al.~\shortcite{ferber:2021:prl} did not test and has not previously been applied to this benchmark set. Details of our implementation of the SING algorithm are provided in the Supplementary Material. Finally, we ran the baselines that do not require offline training, $h$\textsuperscript{FF} and LAMA, on our hardware with the 6 minute time budget.

We define each $h$\textsuperscript{RSL} heuristic through the same NN architecture used by Ferber et al.~\shortcite{ferber:2021:prl}. The NN is a \emph{residual network}~\cite{he2016deep} made up of two dense 250 neuron layers followed by a single residual block with two dense 250 neuron layers followed by a single neuron output. Each neuron in the NN uses the rectified linear activation function. The inputs of the NN are full states of $S(\Pi)$ represented by a Boolean vector $\{0,1\}^{\vert F \vert}$.
As Ferber et al. do, we set the $loss$ function of the SL optimization problem~\eqref{eq:empirical_risk_min} to be
the \emph{mean squared error} (MSE).
We use the Adam~\cite{kingma2014adam} stochastic optimization algorithm to find
locally optimal solutions of~\eqref{eq:empirical_risk_min}, using the following
hyper-parameters: learning rate is $10^{-4}$, batch size is $64$, and maximum
number of epochs is set to $1,000$.
Additionally, we use the \emph{early stopping} heuristic~\cite{duvenaud:early_stopping} setting
the \emph{patience} parameter to $2$. We split each instance data set into training and
validations sets, taking the former $80$\% and the latter $20$\% of available samples.
\begin{figure}[b!]
	\centering
	\vspace{-0.1in}
	\includegraphics[width=0.46\textwidth]{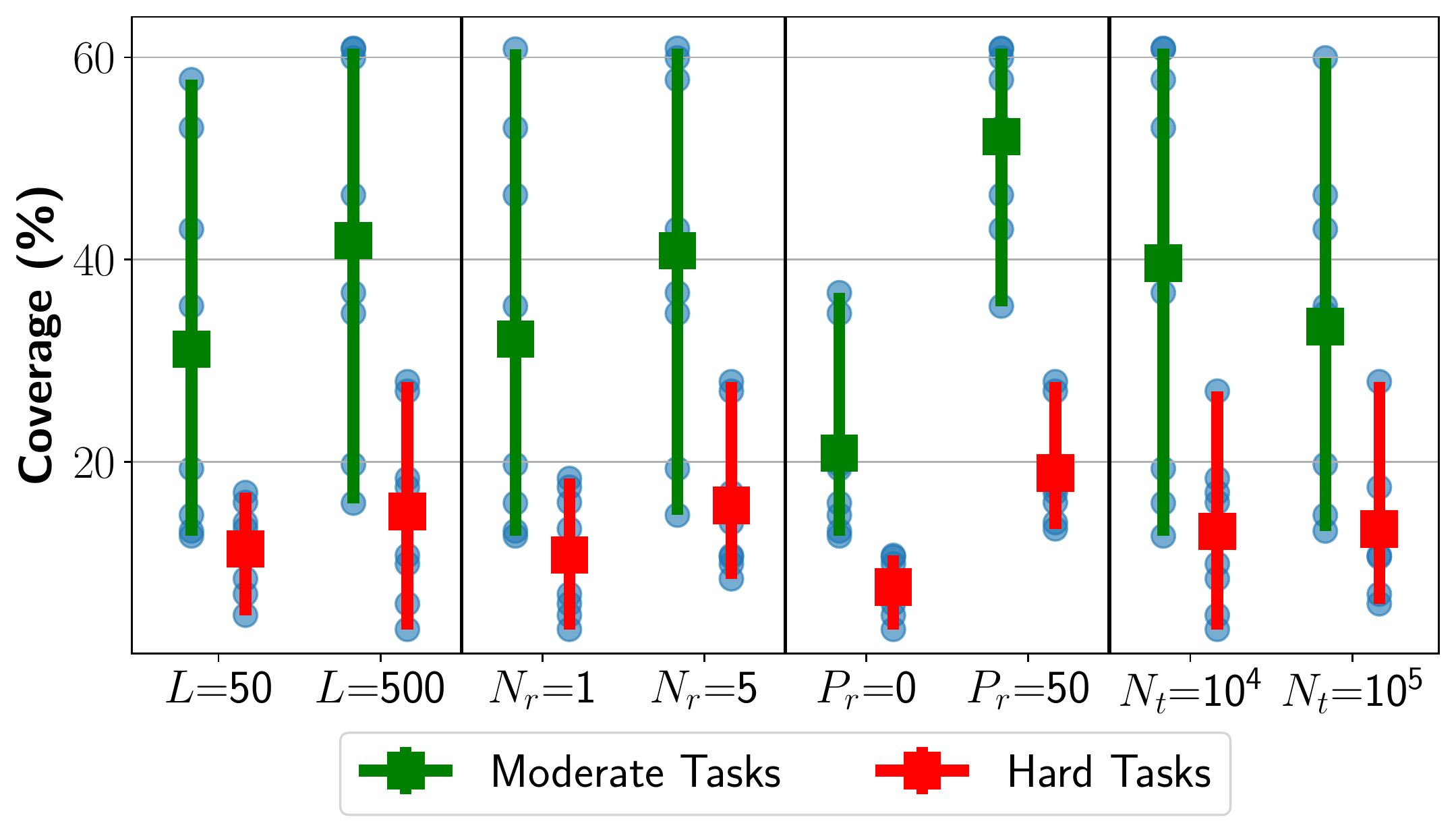}
	\caption{A summary of the average coverage over the ``Moderate" and ``Hard" tasks for 16 different configurations of RSL. The 4 boxes each show a different segmentation of the 16 configurations according to the value of a single hyper-parameter. Each configuration's coverage is marked with a blue dot and the lines show the range of coverage over the configurations with the mean marked with a square.}
	\vspace{-0.2in}
	\label{fig:grid}
\end{figure}
\begin{table*}[h!]
    \centering
    \small
    \setlength\tabcolsep{4pt} 
    \begin{tabular}{lcc|cc|ccccc|cc}
                                   & \multicolumn{2}{c|}{\textbf{Ave. over Trials}}        & \multicolumn{7}{c|}{\textbf{Methods using Validation}}  & & \\
                                  \textbf{Planning Budget} & \multicolumn{2}{c|}{6 minutes}      & \multicolumn{2}{c|}{6 minutes}   & \multicolumn{5}{c|}{600 minutes\textsuperscript{*}} & \multicolumn{2}{c}{6 minutes} \\   
                                   & $h$\textsuperscript{RSL} & $h$\textsuperscript{N-RSL} & $h$\textsuperscript{RSL} & $h$\textsuperscript{N-RSL}            & $h$\textsuperscript{Boot}          & $h$\textsuperscript{BExp} & $h$\textsuperscript{AVI}  & $h$\textsuperscript{TSL}            & $h$\textsuperscript{HGN}  & $h$\textsuperscript{FF}    &         LAMA    \\ \hline
\multicolumn{10}{c}{Moderate Tasks}                                                                                                                                                                                                                                        \\ \hline
blocks                             & 80.0                & 91.5                & 99.2                & \textbf{100}      & 18.0         & 0.0           & 0.0           & 80.4          & \textbf{100}       & 100               & 100             \\
depot                              & 48.7                & 58.8                & 77.0                & 89.0       & 60.3         & 32.7          & 54.7           & \textbf{90.3} & 0.0      & 93.7                & 99.3                                        \\
grid                               & 71.0                & 60.3                & \textbf{100}      & 97.0                & \textbf{100}         & \textbf{100}         & 51.0          & 93.0          & 0.0            & 90.0                & 100                                  \\
npuzzle                            & 15.3                & 18.9                & 25.2                & \textbf{46.8}       & 28.0          & 0.0          & 1.0           & 0.0           & 0.3          & 92.8                & 97.8                                \\
pipesworld                         & 70.7                & 69.6                & 85.8                & 82.6       & 57.8         & 68.4          & 50.2          & \textbf{92.2} & 7.6                  & 63.0                & 98.6                  \\
rovers                             & 12.6                & 12.5                & 15.0                & 15.8                & \textbf{48.2}        & 21.8 & 45.0          & 26.0          & 14.0               & 79.5                & 100                \\
scanalyzer                         & 87.4                & 94.1                & \textbf{100}      & \textbf{100}      & 33.3         & 70.7          & 67.3          & 82.7          & 11.0           & 91.7                & 100                              \\
storage                            & 16.1                & 16.4                & 17.0                & 18.0                & \textbf{89.0}         & 57.5 & 69.5          & 24.5          & 0.0               & 27.5                & 34.0                   \\
transport                          & 74.8                & 70.8                & \textbf{100}      & 93.8                & \textbf{100}        & \textbf{100}         & 87.5          & 99.2          & 94.7          & 100               & 100                                  \\
visitall                           & 93.6                & 95.4                & 99.7                & 98.0                & 55.3         & 0.0          & 0.0           & 0.0           & \textbf{100}                & 89.0                & 100           \\ \hline
\textbf{Average}                   & 57.0       & 58.8                & 71.9                & \textbf{74.1}       & 59.0                & 45.1          & 42.6     &58.8     & 32.8         & 82.7                & 93.0                                    \\ \hline
\multirow{2}{*}{\parbox{2.1cm}{\textbf{Average Train Time/Inst. (Hr)}}} & \multicolumn{1}{c}{\multirow{2}{*}{0.37}} & \multicolumn{1}{|c|}{\multirow{2}{*}{0.38}} & \multicolumn{1}{c}{\multirow{2}{*}{3.68}} & \multicolumn{1}{|c}{\multirow{2}{*}{3.75}} & \multicolumn{4}{|c}{\multirow{2}{*}{112\textsuperscript{*}}}                                          & \multicolumn{1}{|c|}{\multirow{2}{*}{13.1\textsuperscript{*}}} & \multicolumn{2}{c}{\multirow{2}{*}{-}   }      \\ & \multicolumn{1}{c}{} & \multicolumn{1}{|c|}{} & \multicolumn{1}{c}{} & \multicolumn{1}{|c}{} & \multicolumn{4}{|c}{}                                          & \multicolumn{1}{|c|}{} & \multicolumn{2}{c}{  } \ \\ \hline
\multicolumn{10}{c}{Hard Tasks}                                                                                                                                                                                                                                            \\ \hline
blocks                             & 18.5                & 45.3                & 36.4                & \textbf{68.0}       & 0.0          & 0.0           & 0.0           & 0.0           & 50.0       & 46.4                & 96.0                             \\
depot                              & 4.0                 & 3.3                 & 14.6                & 12.6               & 8.3           &4.3   & 12.9        & \textbf{35.4} & 0.0                & 9.4                 & 69.4               \\
grid                               & 21.9                & 32.1                & 67.2                & 69.0              & 87.8          & \textbf{95.0} & 70.5 & 60.2          & 0.0                & 31.0                & 100        \\
npuzzle                            & 0.0                 & 0.0                 & 0.0                 & 0.0                 & 0.0          & 0.0           & 0.0           & 0.0           & 0.0               & 5.8                 & 12.8                 \\
pipesworld                         & 27.4                & 25.0                & 33.8                & 33.1                 & 23.4          & 19.1    & 8.0       & \textbf{48.7} & 0.0        & 20.4                & 68.0              \\
rovers                             & 0.0                 & 0.1                 & 0.1                 & 0.1                 & 2.8           & 0.8     & \textbf{6.5}       & 1.5           & 0.3             & 8.3                 & 100                       \\
scanalyzer                         & 86.4                & 89.7                & \textbf{100}      & \textbf{100}           & 3.3           & 0.0    & 60.7       & 60.0          & 0.0                    & 83.3                & 100                \\
storage                            & 3.7                 & 2.5                 & 1.8                 & 0.2                 & \textbf{27.2} & 13.2     & 15.8      & 0.0           & 0.0                 & 9.2                 & 9.0         \\
transport                          & 0.0                 & 1.2                 & 0.0                 & \textbf{8.8}        & 0.0          & 0.0           & 2.4           & 0.0           & 0.0                 & 0.0                 & 59.6                       \\
visitall                           & 82.4                & 91.8                & \textbf{100}      & \textbf{100}               & 28.0          & 0.0   & 0.0        & 0.0           & \textbf{100}        & 40.0                & 100           \\ \hline
\textbf{Average}                   & 24.4                & 29.1                & 35.4                & \textbf{39.2}                & 18.1          & 13.3   & 17.7       & 20.6          & 15.0           & 25.4                & 71.5                        \\ \hline
\multirow{2}{*}{\parbox{2.1cm}{\textbf{Average Train Time/Inst. (Hr)}}} & \multicolumn{1}{c}{\multirow{2}{*}{0.69}} & \multicolumn{1}{|c|}{\multirow{2}{*}{0.69}} & \multicolumn{1}{c}{\multirow{2}{*}{6.88}} & \multicolumn{1}{|c}{\multirow{2}{*}{6.88}} & \multicolumn{4}{|c}{\multirow{2}{*}{112\textsuperscript{*}}  }                                        & \multicolumn{1}{|c|}{\multirow{2}{*}{10.4\textsuperscript{*}}}  & \multicolumn{2}{c}{\multirow{2}{*}{-}} \\  \multicolumn{1}{c}{} & \multicolumn{1}{c}{} & \multicolumn{1}{|c|}{} & \multicolumn{1}{c}{} & \multicolumn{1}{|c}{} & \multicolumn{4}{|c}{}      &\multicolumn{1}{|c|}{}      & \multicolumn{2}{c}{  }                                                
\end{tabular}
    \caption{Comparison of the coverage of $h$\textsuperscript{RSL} with other Neural Network defined heuristics functions introduced by Ferber et al. \shortcite{ferber:2021:prl} $h$\textsuperscript{Boot}, $h$\textsuperscript{BExp}, $h$\textsuperscript{AVI}, as well as the $h$\textsuperscript{TSL} introduced by Ferber et al. \shortcite{ferber2020neural}, and $h$\textsuperscript{HGN} from Shen et al. \shortcite{shen2020learning}. The table also shows the coverage of $h$\textsuperscript{ff}\cite{hoffmann2001ff} and LAMA \cite{richter2010lama} (run on same hardware as RSL). Note that $h$\textsuperscript{HGN} trains one heuristic per domain not instance with training budgets between 14.7 and 112 CPU hours. The bold numbers indicate the highest coverage among the Neural Network methods. \textsuperscript{*}Note that the baseline learning methods that use a 600 minute planning budget are the values as reported by Ferber et al.~\shortcite{ferber:2021:prl}. According to standard single thread CPU benchmarks, our vCPU can be 20\% faster. We provide a discussion of the impact of discrepancies in hardware and software in the Methodology Section. }
    \vspace{-0.1in}
    \label{tab:comp_baselines}
\end{table*}

Due to computational constraints we do not tune the architecture or any training parameters of the NN. Note that Ferber et al.'s TSL method trains 10 heuristics functions and selects the best performing heuristic on a set of validation states. We follow this method by running RSL 10 times with different random seeds on each instance to produce 10 heuristic functions. We then test each heuristic over a set of validation states, which are collected using the same method as Ferber et al.~\shortcite{ferber:2021:prl}, to select the best performing heuristic for each instance. We report both the average coverage of the 10 heuristics learnt and the best heuristic found via the validation method.
\begin{figure*}[t!]
    \centering
    \vspace{-0.07in}
    \begin{tabular}{ccc}
    \multicolumn{3}{c}{\textbf{Expansions}} \\
  \includegraphics[trim={0cm 0cm 0.2cm 1cm},clip, width=0.3\textwidth]{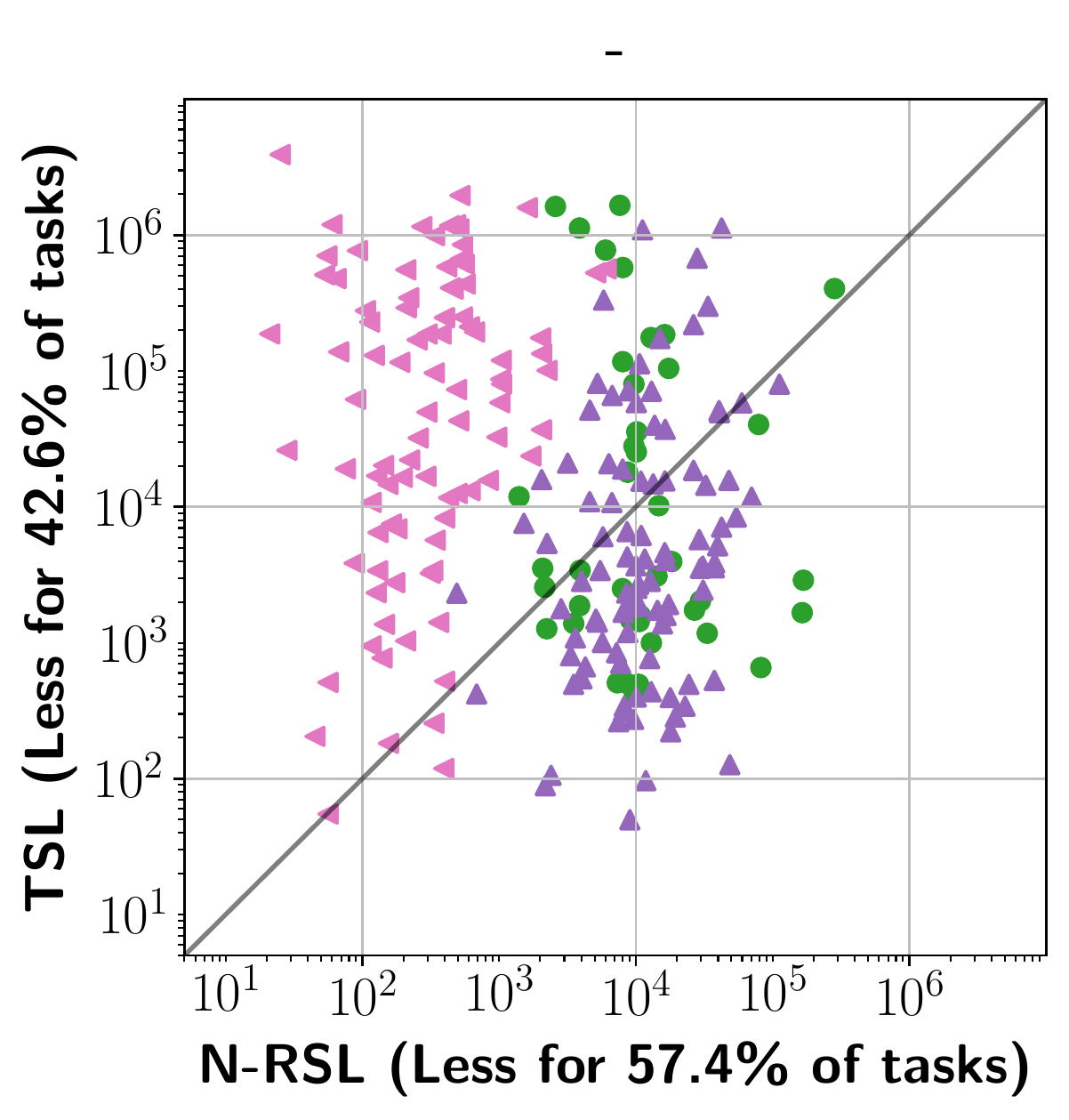} &   \includegraphics[trim={0cm 0cm 0.2cm 1cm},clip, width=0.3\textwidth]{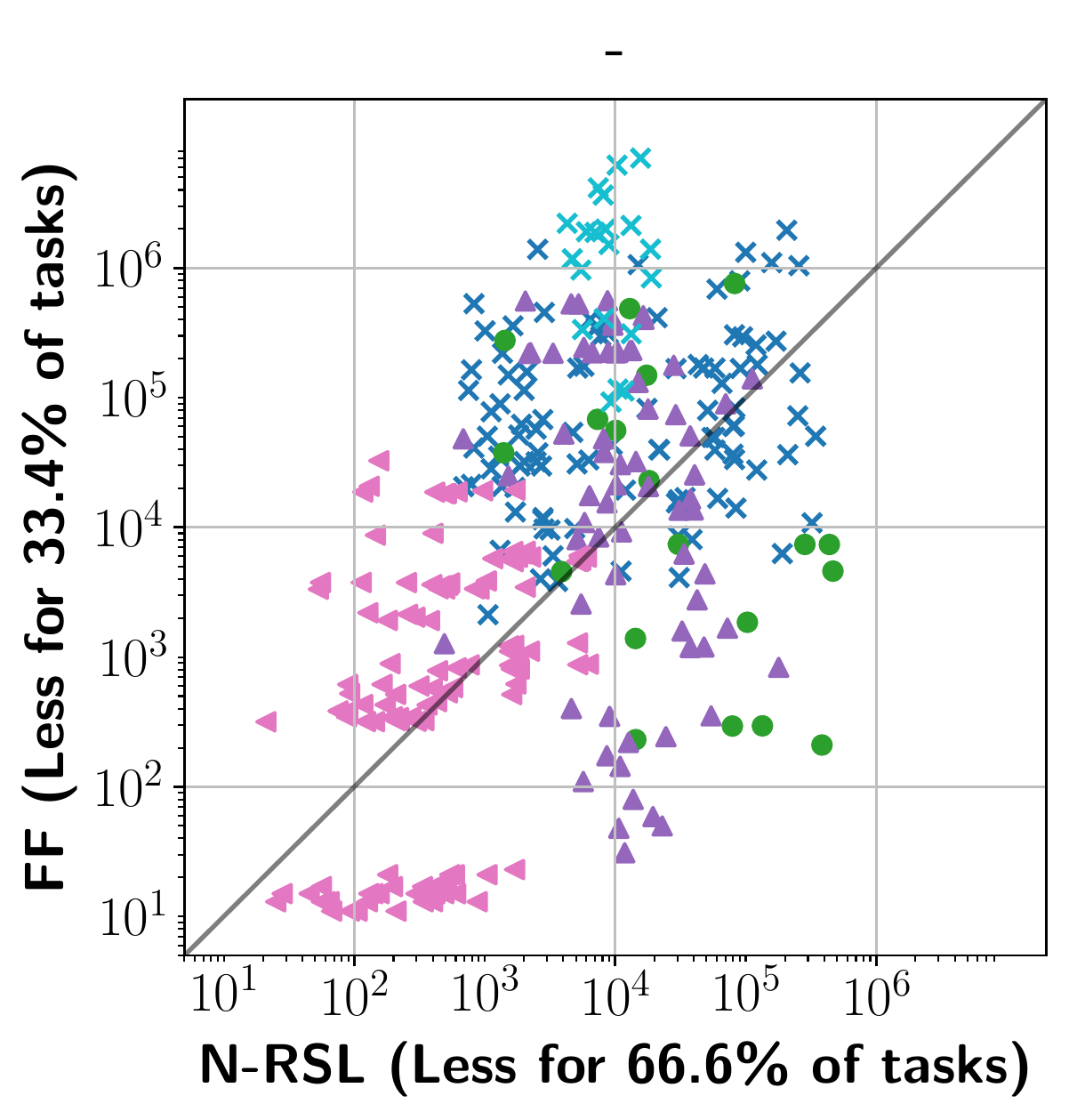} &   \includegraphics[trim={0cm 0cm 0.2cm 1cm},clip, width=0.3\textwidth]{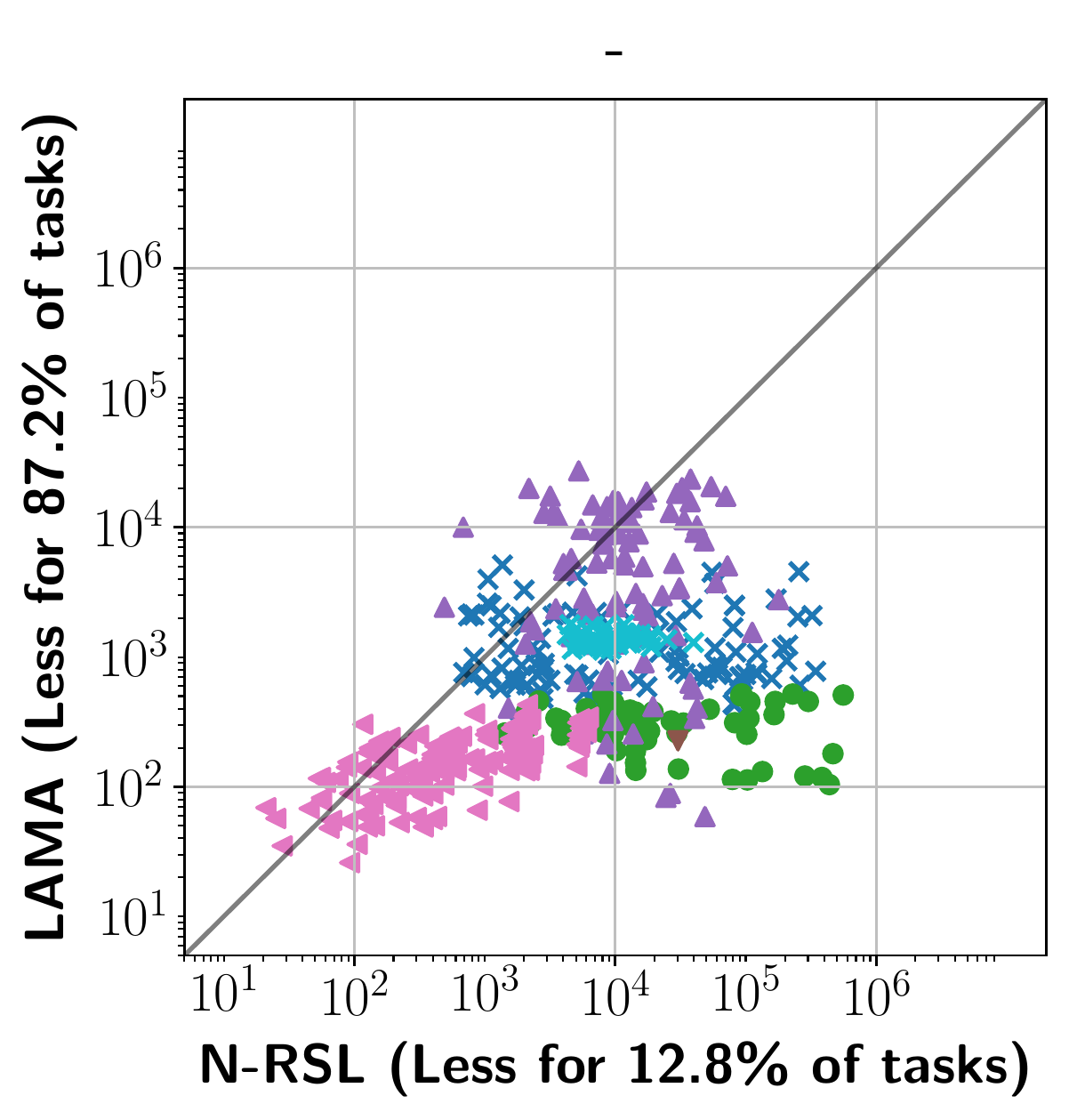} \\ \hline \\[-2ex]
  \multicolumn{3}{c}{\textbf{Plan Length}} \\
  \vspace{-0.01in}
  \includegraphics[trim={0cm 0cm 0.2cm 1cm},clip, width=0.3\textwidth]{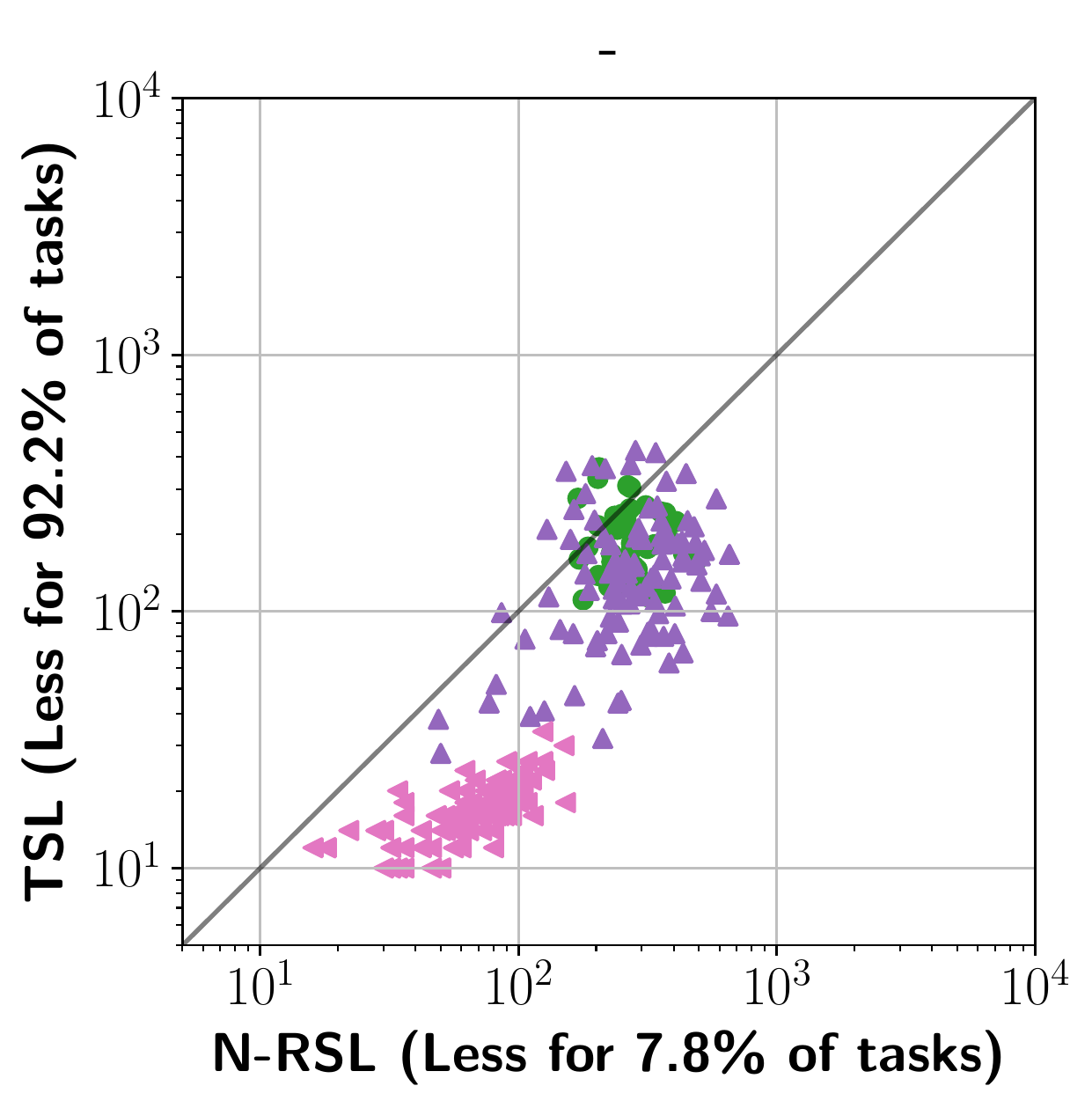} &   \includegraphics[trim={0cm 0cm 0.2cm 1cm},clip, width=0.3\textwidth]{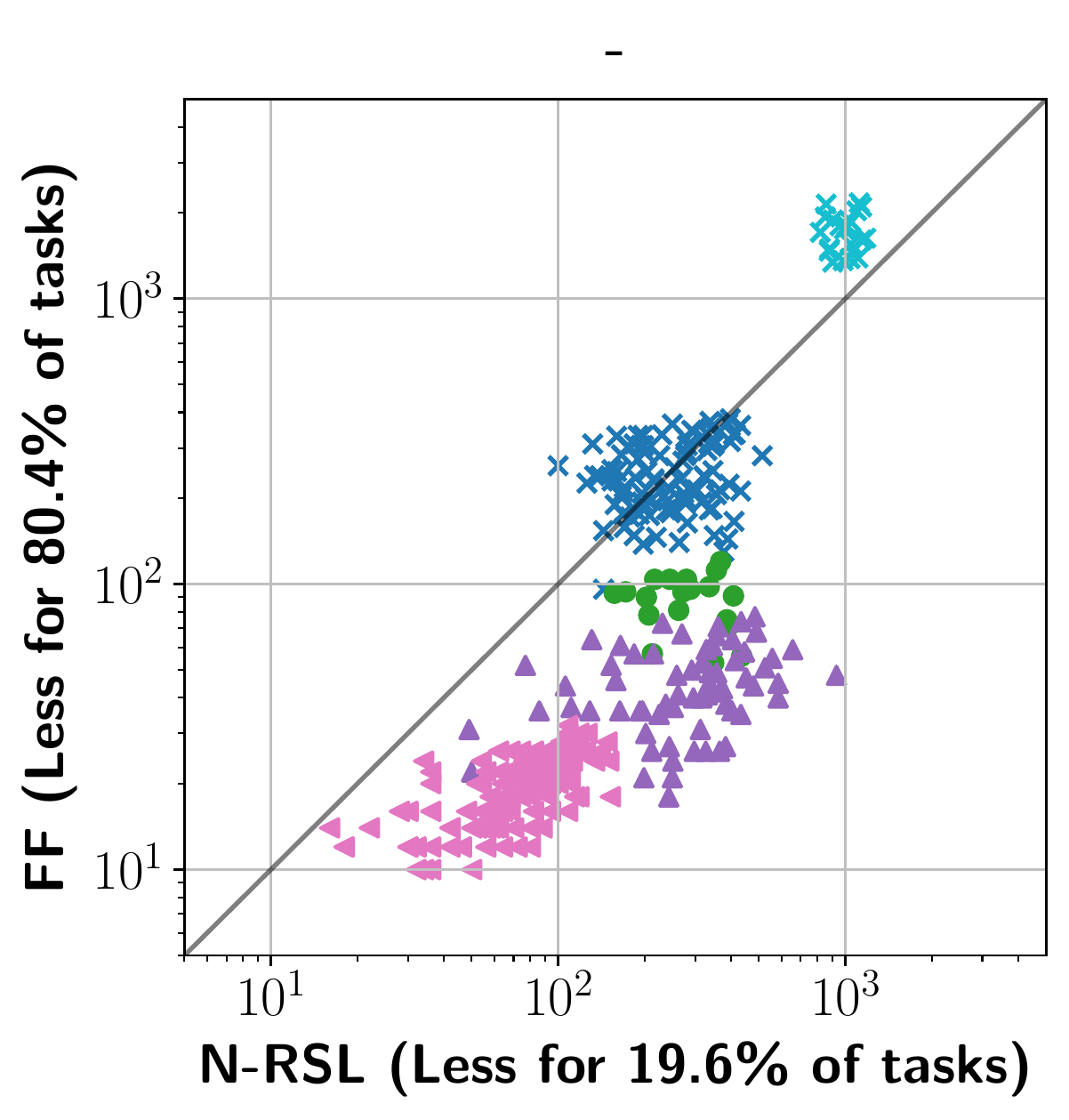}  &   \includegraphics[trim={0cm 0cm 0.2cm 1cm},clip, width=0.3\textwidth]{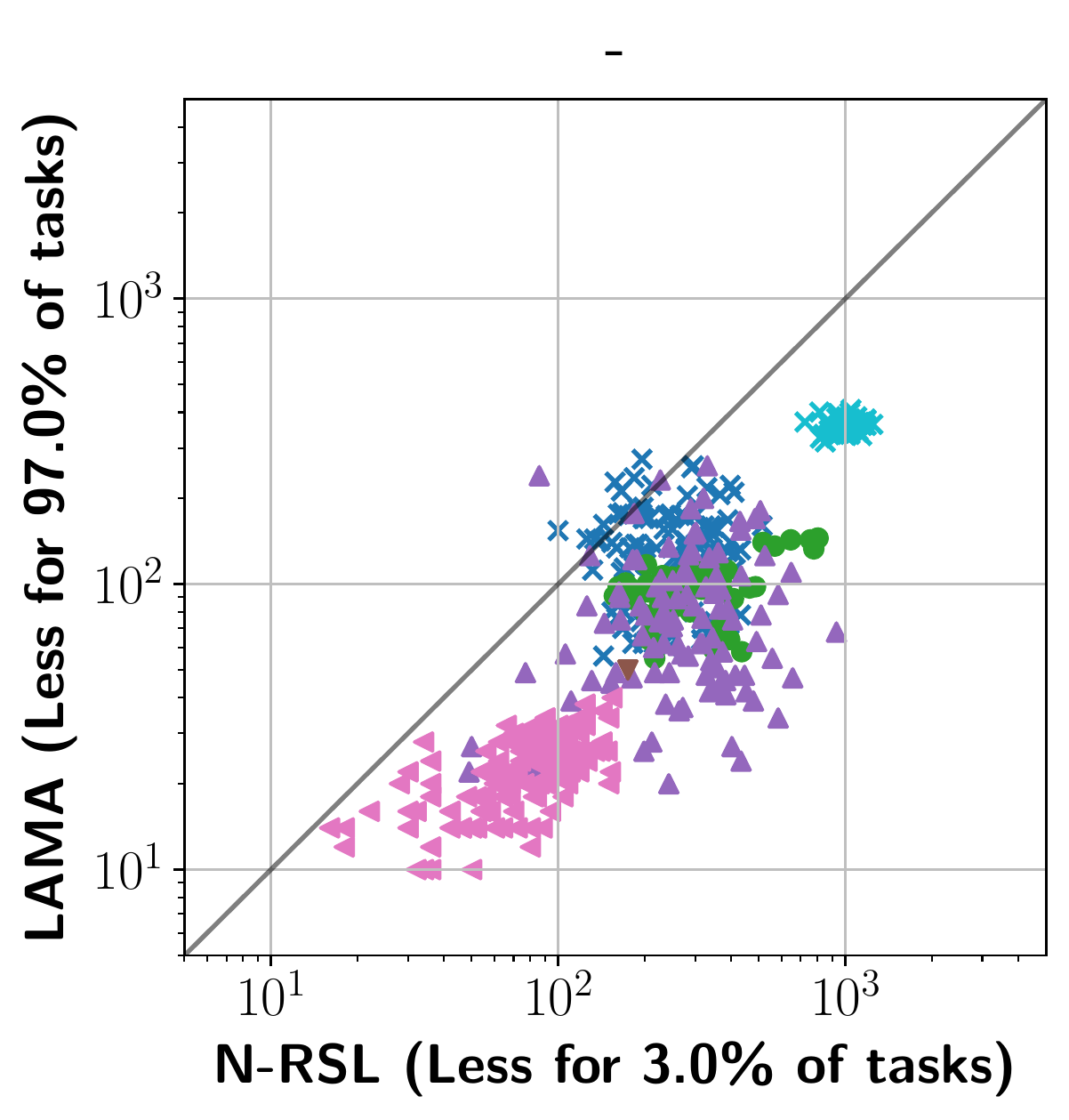} \\
  \multicolumn{3}{c}{\includegraphics[trim={0.5cm 6cm 0.5cm 5.5cm},clip, width=\textwidth]{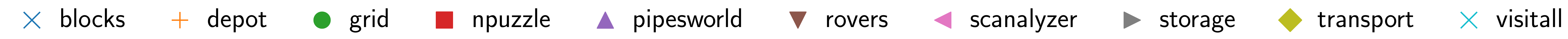}}
\end{tabular}
\vspace{-0.15in}
    \caption{Pairwise comparisons over commonly solved tasks between $h$\textsuperscript{N-RSL} using the validation method, and the best performing baseline NN based heuristic function $h$\textsuperscript{TSL} and model-based methods $h$\textsuperscript{FF} and LAMA over the ``Hard Tasks" benchmark set. The top row shows the scatter plot of the number of expansions used by each search algorithm while the bottom row shows the plan length found by each algorithm. The percentage value, shown in the axis titles, is the percentage of the commonly solved tasks that the relevant algorithm required fewer expansions or discovered a shorter plan than the algorithm it is being compared to. Note that tasks which are not solved by at least one of the algorithms in the pair are not included in these graphs. Graphs were generated using Downward Lab~\cite{seipp-et-al-zenodo2017}.}
    \label{fig:expansionsPlanQuality}
    \vspace{-0.2in}
\end{figure*}

\begin{figure}[t!]
    \centering

 \includegraphics[trim={0.5cm 0cm 1cm 1cm},clip, width=0.5\textwidth]{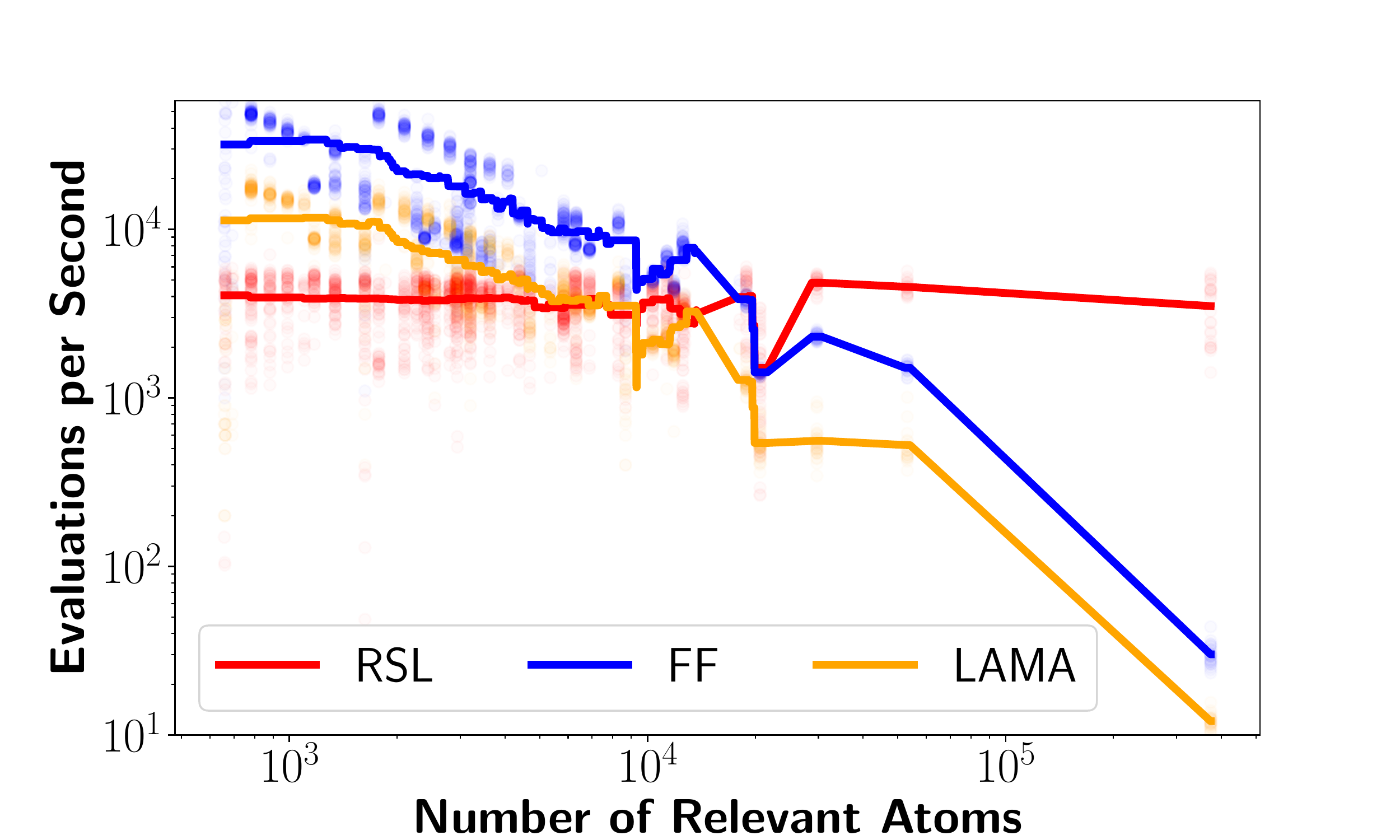}

    \caption{The number of heuristic evaluations per second versus the number of relevant atoms as determined by FD translator's reachability analysis for a set of commonly solved problem instances between $h$\textsuperscript{RSL}, $h$\textsuperscript{FF} and LAMA. Each dot represents the values for a single problem instance and the line represents the moving average with a window size of 2000 atoms.}
    \label{fig:expansionsPerSec}
\end{figure}

Training is run simultaneously for 80 instances over 40 Intel\textregistered Xeon\textregistered Gold 6138 CPU @ 2.00GHz processor cores with 720GB of shared RAM, limiting each training instance to run on a single vCPU (each CPU core has 2 vCPUs). For training the $h$\textsuperscript{RSL} heuristic, the Tarski framework~\cite{tarski} is used to ground each problem instance in order to get the $F$,$O$, and $G$ sets used for the regressions. In line with the $h$\textsuperscript{Boot}, $h$\textsuperscript{BExp}, $h$\textsuperscript{AVI} and $h$\textsuperscript{SL} heuristics, the FD translator is used for identifying a set of state mutexes to enforce, for both the state sampling and valid actions for the regression operator as described in the previous Section. For evaluation we use the same hardware, along with Downward lab~\cite{seipp-et-al-zenodo2017} to run 80 FD searches at once limiting each FD search to run on a single vCPU with a maximum of 3.8 GB of memory. The PyTorch library~\cite{NEURIPS2019_9015} is used for the training and evaluations of RSL's NN. The RSL code, result files and additional implementation details are included in the Supplementary Material.

\subsection{Results}

First we present the results from a grid search over the hyper-parameters of RSL. To maximise the number of hyper-parameter combinations we could test with our limited computational resources, we evaluated each trained heuristic from the grid search on 10 out of the 50 initial states for each of the 143 instances in the benchmark set. Figure~\ref{fig:grid} summarises the impact of each of the hyper-parameter values tested on the average coverage of $h$\textsuperscript{RSL} across the benchmark set. For the hyper-parameter grid search we tested 16 different settings over the different combinations of $N_t=$10,000 or 100,000, $P_r=$ 0 or 50, $N_r=$ 1 or 5, and $L=$ 50 or 500. The Supplementary Material includes a table detailing the different configurations tested along with their average coverage over each domain. Figure~\ref{fig:grid} shows that the value of $P_r$ has the biggest influence on the average coverage over the benchmark set of $h$\textsuperscript{RSL} used in GBFS. This observation shows the importance of including states in the training set that are not within any of the sets of states visited by the rollouts from the goal set. Figure~\ref{fig:grid} also suggests that RSL benefits from using more than one rollout for extracting training states and considering 500 applications of the regression operator instead of 50. Interestingly, Figure~\ref{fig:grid} shows that training with 10,000 or 100,000 states does not influence the performance in an obvious way. However, overall the best performing set of hyper-parameters over the ``Hard Tasks" which we use for all experiments from this point forth are $N_t=$100,000, $P_r=$50, $N_r$=5, and $L=$500.

Table~\ref{tab:comp_baselines} shows a comparison of existing methods with respect to RSL and N-RSL, using the best performing configuration from the hyper-parameter grid search. The SING algorithm is omitted from Table~\ref{tab:comp_baselines} due to poor performance but its results are included in the Supplementary Material. The first notable difference is the average training times used by each algorithm. $h$\textsuperscript{RSL} without validation uses less than 1\% of the training CPU time used by all the other per-instance NN defined heuristics functions for both the ``Moderate" and ``Hard" task sets. Even when using the validation method which requires 10 RSL executions the training time used is less than 7\% of the per-instance NN methods. The overall average coverage over the benchmark set shows that $h$\textsuperscript{RSL} and $h$\textsuperscript{N-RSL} outperform the other NN defined heuristic functions for the ``Hard Tasks" with and without using the validation method. $h$\textsuperscript{RSL} and $h$\textsuperscript{N-RSL} when using the validation method also outperform the other NN heuristics on the ``Moderate Tasks". The model-based method LAMA clearly dominates all other methods. The model-based heuristic $h$\textsuperscript{FF} also outperforms $h$\textsuperscript{RSL} and $h$\textsuperscript{N-RSL} on the ``Moderate Tasks", however on the ``Hard Tasks" $h$\textsuperscript{RSL} with validation and $h$\textsuperscript{N-RSL} with and without validation have better overall performance. While $h$\textsuperscript{N-RSL} has better average coverage than $h$\textsuperscript{RSL} the difference is small with a 1.8\% and 4.7\% average improvement over the ``Moderate" and ``Hard Tasks" respectively when no validation is used.

Figure~\ref{fig:expansionsPlanQuality} provides additional insights into the performance of $h$\textsuperscript{N-RSL} compared to the best performing baseline algorithms. For commonly solved tasks, $h$\textsuperscript{TSL} on average produces higher quality plans than $h$\textsuperscript{N-RSL}. This observation could be a result of $h$\textsuperscript{TSL} having higher quality labels used in training. $h$\textsuperscript{TSL} uses $h$\textsuperscript{FF} with GBFS to label sampled states in the train set, while $h$\textsuperscript{N-RSL} labels states with goal distance estimates derived from the state sets visited by N-RSL's regressions. The objective of $h$\textsuperscript{FF} with GBFS is to find the shortest path from a state to the goal state, while the objective of N-RSL's regression is to maximise a defined novelty measure (Equation~\ref{eq:actionSel}) with the aim of finding the shortest path that visits the most number of reachable atoms $a \in F$. This trade-off between exploration and exploitation shows through in the results where $h$\textsuperscript{N-RSL} has a higher coverage than $h$\textsuperscript{TSL} over a diverse set of initial states but produces lower quality plans in terms of plan length among the commonly solved tasks. This behaviour is also displayed when comparing directly to the $h$\textsuperscript{FF} heuristic, while LAMA dominates $h$\textsuperscript{N-RSL} in both coverage and plan quality.

Figure~\ref{fig:expansionsPerSec} shows that for smaller problem instances the model-based methods $h$\textsuperscript{FF}, and LAMA are able to compute a higher number of evaluations per second than $h$\textsuperscript{RSL}. However, $h$\textsuperscript{RSL}'s evaluations per second stays relatively constant while the model-based methods evaluations per second decreases as problem instances become larger.

\section{Discussion}
The heuristic values from the NN defined heuristic functions we report in this paper are calculated by taking the linear combination of the outputs of the last hidden layer of the NN. We note that the NN heuristics discussed in this paper and those of Ferber et al.~\shortcite{ferber2020neural,ferber:2021:prl} are an application of \emph{state aggregation}~\cite{bertsekas2018feature}, a well-known technique in Approximate Dynamic Programming. 
For each of the different problem instances we explored in this paper, the size of the last hidden layer was fixed at 250 neurons. It is possible that for harder problem instances a feature vector larger than 250 is required in order to produce more useful heuristics. Ferber et al.~\shortcite{ferber2020neural} have performed an investigation into effects of the NN architecture in terms of the activation functions used and the number of hidden layers, however they did not explore the number of neurons used within the hidden layers. One avenue of future work is to investigate the impact of scaling the number of neurons used within the hidden layers of the NN so the capacity~\cite{goodfellow2016deep} of the NN scales too with the size of the planning instance. 

However, one advantage we observed of the NN heuristic functions over the model-based heuristics in Figure ~\ref{fig:expansionsPerSec} is that the gap in number of evaluated states per second narrows until it eventually disappears, so the NN heuristic becomes significantly faster in comparison (and less informed too, probably) as the problem size increases. This is because only the input layer size changes according to $|F|$ for the NN architecture used in this work, while the rest of the neurons and edges remain constant. That is, as the input layer has $|F|$ neurons and the first hidden layer has 250 neurons there are $|F| \times 250$ edges joining the two layers resulting in the NN evaluations being in $O(|F|)$ time. Conversely, $h$\textsuperscript{FF} requires completing a search over the problem with the delete relaxation which has time-complexity that is polynomial over the number of actions and reachable atoms~\cite{hoffmann2001ff}. 

Future work should investigate NN heuristics like $h$\textsuperscript{RSL} over problems that are too large for $h$\textsuperscript{FF} or LAMA to produce enough evaluations per second to be useful. It is possible that a NN defined heuristic function will be able to produce a large enough number of evaluations per second to be useful over these large domains, even if the heuristic is less informed than the model-based heuristics. Last, we have shown that $h$\textsuperscript{RSL} and $h$\textsuperscript{N-RSL} have good performance while requiring significantly smaller resources than the NN-based baselines considered in the paper. We think that this encouraging result supports the assumption that RSL and follow-up methods can be used to obtain useful heuristic functions for large instances of challenging combinatorial optimization problems.
\clearpage

\bibliography{lookaheads}

\begin{thebibliography}{33}
\providecommand{\natexlab}[1]{#1}

\bibitem[{Abadi et~al.(2015)Abadi, Agarwal, Barham, Brevdo, Chen, Citro,
  Corrado, Davis, Dean, Devin, Ghemawat, Goodfellow, Harp, Irving, Isard, Jia,
  Jozefowicz, Kaiser, Kudlur, Levenberg, Man\'{e}, Monga, Moore, Murray, Olah,
  Schuster, Shlens, Steiner, Sutskever, Talwar, Tucker, Vanhoucke, Vasudevan,
  Vi\'{e}gas, Vinyals, Warden, Wattenberg, Wicke, Yu, and
  Zheng}]{tensorflow2015}
Abadi, M.; Agarwal, A.; Barham, P.; Brevdo, E.; Chen, Z.; Citro, C.; Corrado,
  G.~S.; Davis, A.; Dean, J.; Devin, M.; Ghemawat, S.; Goodfellow, I.; Harp,
  A.; Irving, G.; Isard, M.; Jia, Y.; Jozefowicz, R.; Kaiser, L.; Kudlur, M.;
  Levenberg, J.; Man\'{e}, D.; Monga, R.; Moore, S.; Murray, D.; Olah, C.;
  Schuster, M.; Shlens, J.; Steiner, B.; Sutskever, I.; Talwar, K.; Tucker, P.;
  Vanhoucke, V.; Vasudevan, V.; Vi\'{e}gas, F.; Vinyals, O.; Warden, P.;
  Wattenberg, M.; Wicke, M.; Yu, Y.; and Zheng, X. 2015.
\newblock {TensorFlow}: Large-Scale Machine Learning on Heterogeneous Systems.
\newblock Software available from tensorflow.org.

\bibitem[{Arfaee, Zilles, and Holte(2011)}]{arfaeebootstrap}
Arfaee, S.~J.; Zilles, S.; and Holte, R.~C. 2011.
\newblock Learning heuristic functions for large state spaces.
\newblock \emph{Artificial Intelligence}, 175(16-17): 2075--2098.

\bibitem[{Bellman(1957)}]{bellman:57}
Bellman, R. 1957.
\newblock \emph{Dynamic Programming}.
\newblock Princeton University Press.

\bibitem[{Bertsekas(2018)}]{bertsekas2018feature}
Bertsekas, D.~P. 2018.
\newblock Feature-based aggregation and deep reinforcement learning: A survey
  and some new implementations.
\newblock \emph{IEEE/CAA Journal of Automatica Sinica}, 6(1): 1--31.

\bibitem[{Bonet and Geffner(2001)}]{BONET20015}
Bonet, B.; and Geffner, H. 2001.
\newblock Planning as heuristic search.
\newblock \emph{Artificial Intelligence}, 129(1): 5--33.

\bibitem[{Bonet, Loerincs, and Geffner(1997)}]{bonet1997robust}
Bonet, B.; Loerincs, G.; and Geffner, H. 1997.
\newblock A robust and fast action selection mechanism for planning.
\newblock In \emph{AAAI/IAAI}, 714--719.

\bibitem[{Chollet et~al.(2015)}]{chollet2015keras}
Chollet, F.; et~al. 2015.
\newblock Keras.
\newblock \url{https://keras.io}.

\bibitem[{Duvenaud, Maclaurin, and Adams(2016)}]{duvenaud:early_stopping}
Duvenaud, D.; Maclaurin, D.; and Adams, R. 2016.
\newblock Early stopping as nonparametric variational inference.
\newblock In \emph{Artificial Intelligence and Statistics (AISTATS)},
  1070--1077.

\bibitem[{Edelkamp(2014)}]{edelkamp2014planning}
Edelkamp, S. 2014.
\newblock Planning with pattern databases.
\newblock In \emph{Sixth European Conference on Planning}, 84--90.

\bibitem[{Ferber et~al.(2021)Ferber, Geiber, Trevizan, Helmert, and
  Hoffmann}]{ferber:2021:prl}
Ferber, P.; Geiber, F.; Trevizan, F.; Helmert, M.; and Hoffmann, J. 2021.
\newblock Neural Network Heuristic Functions for Classical Planning:
  Reinforcement Learning and Comparison to Other Methods.
\newblock In \emph{2nd PRL Workshop, International Conference on Automated
  Planning and Scheduling}.

\bibitem[{Ferber, Helmert, and Hoffmann(2020)}]{ferber2020neural}
Ferber, P.; Helmert, M.; and Hoffmann, J. 2020.
\newblock Neural network heuristics for classical planning: A study of
  hyperparameter space.
\newblock In \emph{European Conference on Artificial Intelligence}, 2346--2353.

\bibitem[{Fikes and Nilsson(1971)}]{fikes1971strips}
Fikes, R.~E.; and Nilsson, N.~J. 1971.
\newblock STRIPS: A new approach to the application of theorem proving to
  problem solving.
\newblock \emph{Artificial intelligence}, 2(3-4): 189--208.

\bibitem[{Franc{\'e}s and Ram{\'\i}rez(2021)}]{tarski}
Franc{\'e}s, G.; and Ram{\'\i}rez, M. 2021.
\newblock Tarski - An AI Planning Modeling Framework.
\newblock \url{https://github.com/aig-upf/tarski}.

\bibitem[{Frances et~al.(2017)Frances, Ram{\'\i}rez, Lipovetzky, and
  Geffner}]{frances2017purely}
Frances, G.; Ram{\'\i}rez, M.; Lipovetzky, N.; and Geffner, H. 2017.
\newblock Purely declarative action descriptions are overrated: Classical
  planning with simulators.
\newblock In \emph{International Joint Conference on Artificial Intelligence},
  4294--4301.

\bibitem[{Geffner and Bonet(2013)}]{geffner2013concise}
Geffner, H.; and Bonet, B. 2013.
\newblock A concise introduction to models and methods for automated planning.
\newblock \emph{Synthesis Lectures on Artificial Intelligence and Machine
  Learning}, 8(1): 1--141.

\bibitem[{Goodfellow, Bengio, and Courville(2016)}]{goodfellow2016deep}
Goodfellow, I.; Bengio, Y.; and Courville, A. 2016.
\newblock \emph{Deep learning}.
\newblock MIT press.

\bibitem[{Hardt and Recht(2021)}]{hardtrecht}
Hardt, M.; and Recht, B. 2021.
\newblock \emph{Patterns, predictions, and actions: A story about machine
  learning}.
\newblock \url{https://mlstory.org}.

\bibitem[{He et~al.(2016)He, Zhang, Ren, and Sun}]{he2016deep}
He, K.; Zhang, X.; Ren, S.; and Sun, J. 2016.
\newblock Deep residual learning for image recognition.
\newblock In \emph{IEEE Conference on Computer Vision and Pattern Recognition},
  770--778.

\bibitem[{Helmert(2006)}]{helmert2006fast}
Helmert, M. 2006.
\newblock The fast downward planning system.
\newblock \emph{Journal of Artificial Intelligence Research}, 26: 191--246.

\bibitem[{Helmert et~al.(2007)Helmert, Haslum, Hoffmann
  et~al.}]{helmert2007flexible}
Helmert, M.; Haslum, P.; Hoffmann, J.; et~al. 2007.
\newblock Flexible Abstraction Heuristics for Optimal Sequential Planning.
\newblock In \emph{International Conference on Automated Planning and
  Scheduling}, 176--183.

\bibitem[{Helmert, R{\"o}ger et~al.(2008)}]{helmert2008good}
Helmert, M.; R{\"o}ger, G.; et~al. 2008.
\newblock How Good is Almost Perfect?.
\newblock In \emph{AAAI}, volume~8, 944--949.

\bibitem[{Hoffmann and Nebel(2001)}]{hoffmann2001ff}
Hoffmann, J.; and Nebel, B. 2001.
\newblock The FF planning system: Fast plan generation through heuristic
  search.
\newblock \emph{Journal of Artificial Intelligence Research}, 14: 253--302.

\bibitem[{Katz et~al.(2017)Katz, Lipovetzky, Moshkovich, and
  Tuisov}]{katz2017adapting}
Katz, M.; Lipovetzky, N.; Moshkovich, D.; and Tuisov, A. 2017.
\newblock Adapting novelty to classical planning as heuristic search.
\newblock In \emph{International Conference on Automated Planning and
  Scheduling}, 172--180.

\bibitem[{Kingma and Ba(2015)}]{kingma2014adam}
Kingma, D.~P.; and Ba, J. 2015.
\newblock Adam: A method for stochastic optimization.
\newblock In \emph{International Conference on Learning Representations}.

\bibitem[{Lei and Lipovetzky(2021)}]{lei2021width}
Lei, C.; and Lipovetzky, N. 2021.
\newblock Width-Based Backward Search.
\newblock In \emph{International Conference on Automated Planning and
  Scheduling}, 219--224.

\bibitem[{Lipovetzky and Geffner(2012)}]{lipovetzky:12:width}
Lipovetzky, N.; and Geffner, H. 2012.
\newblock Width and Serialization of Classical Planning Problems.
\newblock In \emph{European Conference on Artificial Intelligence}, 540--545.

\bibitem[{Lipovetzky, Ram{\'\i}rez, and Geffner(2015)}]{lipovetzky:15:atari}
Lipovetzky, N.; Ram{\'\i}rez, M.; and Geffner, H. 2015.
\newblock Classical planning with simulators: results on the Atari video games.
\newblock In \emph{International Joint Conference on Artificial Intelligence},
  1610--1616.

\bibitem[{Paszke et~al.(2019)Paszke, Gross, Massa, Lerer, Bradbury, Chanan,
  Killeen, Lin, Gimelshein, Antiga, Desmaison, Kopf, Yang, DeVito, Raison,
  Tejani, Chilamkurthy, Steiner, Fang, Bai, and Chintala}]{NEURIPS2019_9015}
Paszke, A.; Gross, S.; Massa, F.; Lerer, A.; Bradbury, J.; Chanan, G.; Killeen,
  T.; Lin, Z.; Gimelshein, N.; Antiga, L.; Desmaison, A.; Kopf, A.; Yang, E.;
  DeVito, Z.; Raison, M.; Tejani, A.; Chilamkurthy, S.; Steiner, B.; Fang, L.;
  Bai, J.; and Chintala, S. 2019.
\newblock PyTorch: An Imperative Style, High-Performance Deep Learning Library.
\newblock In \emph{Advances in Neural Information Processing Systems 32},
  8024--8035.

\bibitem[{Richter and Westphal(2010)}]{richter2010lama}
Richter, S.; and Westphal, M. 2010.
\newblock The LAMA planner: Guiding cost-based anytime planning with landmarks.
\newblock \emph{Journal of Artificial Intelligence Research}, 39: 127--177.

\bibitem[{Rintanen(2008)}]{rintanen2008regression}
Rintanen, J. 2008.
\newblock Regression for classical and nondeterministic planning.
\newblock In \emph{European Conference on Artificial Intelligence}, 568--572.

\bibitem[{Seipp et~al.(2017)Seipp, Pommerening, Sievers, and
  Helmert}]{seipp-et-al-zenodo2017}
Seipp, J.; Pommerening, F.; Sievers, S.; and Helmert, M. 2017.
\newblock {Downward} {Lab}.
\newblock \url{https://doi.org/10.5281/zenodo.790461}.

\bibitem[{Shen, Trevizan, and Thi{\'e}baux(2020)}]{shen2020learning}
Shen, W.; Trevizan, F.; and Thi{\'e}baux, S. 2020.
\newblock Learning domain-independent planning heuristics with hypergraph
  networks.
\newblock In \emph{International Conference on Automated Planning and
  Scheduling}, 574--584.

\bibitem[{Yu, Kuroiwa, and Fukunaga(2020)}]{yu2020learning}
Yu, L.; Kuroiwa, R.; and Fukunaga, A. 2020.
\newblock Learning Search-Space Specific Heuristics Using Neural Network.
\newblock \emph{12th HSDIP Workshop, International Conference on Automated
  Planning and Scheduling}, 1--8.

\end{thebibliography}

\clearpage

\section{\huge{Supplementary Material}}
\vspace{1cm}
\section{Additional RSL Implementation Details}
For the NN training we use the Adam optimizer~\cite{kingma2014adam} with a learning rate of $10^{-4}$, initial decay rates of $\beta^1=0.9$ and $\beta^2=0.999$, and $\epsilon=10^{-8}$. Additionally we use a batch size of 64, a maximum of 1,000 training epochs, and early stopping with a patience of 2, using a split of training samples of 80-20 for training and validation, respectively.

\section{Comparison of Evaluations per Second of Baseline Results}
Table~\ref{tab:norm} shows the evaluations per second ratios between our $h$\textsuperscript{RSL} FastDownward runs and the baseline algorithms run by Ferber et al.~\shortcite{ferber:2021:prl}. 

\section{SING Implementation Details}
We implemented the configuration C4 of the SING algorithm~\cite{yu2020learning} as described by Yu et al. but using the same NN architecture, loss function and training hyper-parameters as used for RSL. $10^5$ samples were used for training, which were collected using a budget of 500 samples per DFS resulting in 200 total DFS rollouts.

\section{Additional Results}
The results for our implementation of SING are shown in Table~\ref{tab:sing}. The performance of SING is very poor compared to the results of the other baselines shown in Table 1 in the main paper. We also performed individual runs of instances using SING with the same NN architecture, loss function, number of training samples, and number of samples per DFS as d
escribed by Yu et al.~\shortcite{yu2020learning}, however the performance observed was still poor. Yu et al. mention that they perform manual tuning of hyper-parameters but do not provide the selected parameters used by the NN training algorithm, which could be the reason for the poor performance observed in our results.

Table~\ref{tab:grid_baselines} shows the results of each configuration tested in the RSL hyper-parameter grid search.

\begin{table}[h!]
    \centering
    \small
    \setlength\tabcolsep{4pt} 
    \begin{tabular}{lc|c}
\multicolumn{1}{c}{}                   & \textbf{Moderate Tasks} & \textbf{Hard Tasks} \\
blocks                                 & 17.6                    & 0.8                 \\
depot                                  & 0.0                     & 0.0                 \\
grid                                   & 0.0                     & 0.0                 \\
npuzzle                                & 0.0                     & 0.0                 \\
pipesworld                & 13.8                    & 0.1                 \\
rovers                                 & 6.2                     & 0.0                 \\
scanalyzer              & 16.7                    & 0.0                 \\
storage                                & 0.0                     & 1.0                 \\
transport               & 0.0                     & 0.0                 \\
visitall                 & 1.3                     & 0.0                 \\ \hline
\textbf{Average}                       & 5.6                     & 0.2                 \\ \hline
\multirow{2}{*}{\parbox{2.1cm}{\textbf{Average Train Time/Inst. (Hr)}}} & 1.6                     & 2.6  \\
& &
\end{tabular}
    \vspace{-0.07in}
    \caption{Coverage of the heuristic learnt by SING (as detailed in the ``SING Implementation Details" Section) in GBFS using a planning time limit of 6 minutes.}
    \label{tab:sing}
\end{table}
\begin{table*}[h!]
    \centering
    \small
    \setlength\tabcolsep{4pt} 
    \begin{tabular}{lccc|ccc|ccc|ccc}
                              & \multicolumn{3}{c|}{\textbf{$h$\textsuperscript{Boot}}}                                                                      & \multicolumn{3}{c|}{\textbf{$h$\textsuperscript{BExp}}}                                                                   & \multicolumn{3}{c|}{\textbf{$h$\textsuperscript{AVI}}}                                                                       & \multicolumn{3}{c}{\textbf{$h$\textsuperscript{TSL}}}                                                                       \\
\multicolumn{1}{c}{\textbf{}} & \multicolumn{1}{c}{\textbf{Ave}} & \multicolumn{1}{c}{\textbf{Min}} & \multicolumn{1}{c|}{\textbf{Max}} & \multicolumn{1}{c}{\textbf{Ave}} & \multicolumn{1}{c}{\textbf{Min}} & \multicolumn{1}{c|}{\textbf{Max}} & \multicolumn{1}{c}{\textbf{Ave}} & \multicolumn{1}{c}{\textbf{Min}} & \multicolumn{1}{c|}{\textbf{Max}} & \multicolumn{1}{c}{\textbf{Ave}} & \multicolumn{1}{c}{\textbf{Min}} & \multicolumn{1}{c}{\textbf{Max}} \\ \hline
blocks                        & 1.2                              & 0.7                              & 5.0                               & -                                & -                                & -                                 & -                                & -                                & -                                 & 2.3                              & 0.4                              & 6.8                              \\
depot                         & 1.2                              & 0.8                              & 1.4                               & 1.2                              & 1.0                              & 1.5                               & 1.2                              & 0.9                              & 2.5                               & 3.9                              & 0.7                              & 9.9                              \\
grid                          & 3.1                              & 0.6                              & 43.8                              & 1.3                              & 0.5                              & 12.7                              & 15.8                             & 0.7                              & 79.8                              & 13.2                             & 4.2                              & 80.5                             \\
npuzzle                       & 1.1                              & 0.8                              & 2.3                               & -                                & -                                & -                                 & 1.5                              & 1.0                              & 2.5                               & -                                & -                                & -                                \\
pipesworld                    & 1.4                              & 0.3                              & 8.0                               & 1.4                              & 0.7                              & 10.8                              & 1.4                              & 0.3                              & 19.7                              & 7.8                              & 0.8                              & 82.6                             \\
rovers                        & 14.6                             & 0.9                              & 70.1                              & 15.3                             & 0.9                              & 72.3                              & 16.1                             & 0.9                              & 85.6                              & 13.7                             & 0.3                              & 43.4                             \\
scanalyzer                    & 1.1                              & 0.1                              & 23.2                              & 0.3                              & 0.03                             & 0.9                               & 1.2                              & 0.1                              & 9.5                               & 8.6                              & 0.2                              & 45.8                             \\
storage                       & 1.5                              & 0.8                              & 4.5                               & 1.3                              & 0.8                              & 2.0                               & 1.2                              & 0.8                              & 1.6                               & 3.3                              & 1.0                              & 6.4                              \\
transport                     & 1.3                              & 0.8                              & 3.8                               & 1.1                              & 0.9                              & 1.7                               & 1.0                              & 0.7                              & 2.9                               & 5.0                              & 0.6                              & 11.2                             \\
visitall                      & 0.9                              & 0.4                              & 1.5                               & -                                & -                                & -                                 & -                                & -                                & -                                 & -                                & -                                & -                              
\end{tabular}
    \vspace{-0.07in}
    \caption{Ratios of evaluations per second of our $h$\textsuperscript{RSL} verse the baseline methods run by Ferber et al.~\shortcite{ferber:2021:prl}} for commonly solved instances.
    \label{tab:norm}
\end{table*}

\begin{table*}[h!]
    \centering
    \small
    \setlength\tabcolsep{4pt} 
    \begin{tabular}{lcccccccccccccccc}
\textbf{$N_t$} & \multicolumn{8}{c|}{\textbf{10,000}}                                                                                                                                                                                                                                                             & \multicolumn{8}{c}{\textbf{100,000}}                                                                                                                                                                                                                                            \\ \hline
\textbf{$P_r$} & \multicolumn{4}{c|}{\textbf{0}}                                                                                                               & \multicolumn{4}{c|}{\textbf{50}}                                                                                                                 & \multicolumn{4}{c|}{\textbf{0}}                                                                                                               & \multicolumn{4}{c}{\textbf{50}}                                                                                                 \\ \hline
\textbf{$N_r$}      & \multicolumn{2}{c|}{\textbf{1}}                                       & \multicolumn{2}{c|}{\textbf{5}}                                       & \multicolumn{2}{c|}{\textbf{1}}                                        & \multicolumn{2}{c|}{\textbf{5}}                                         & \multicolumn{2}{c|}{\textbf{1}}                                       & \multicolumn{2}{c|}{\textbf{5}}                                       & \multicolumn{2}{c|}{\textbf{1}}                                           & \multicolumn{2}{c}{\textbf{5}}                      \\ \hline
\textbf{$L$}             & \multicolumn{1}{c|}{\textbf{50}}  & \multicolumn{1}{c|}{\textbf{500}} & \multicolumn{1}{c|}{\textbf{50}}  & \multicolumn{1}{c|}{\textbf{500}} & \multicolumn{1}{c|}{\textbf{50}}  & \multicolumn{1}{c|}{\textbf{500}}  & \multicolumn{1}{c|}{\textbf{50}}   & \multicolumn{1}{c|}{\textbf{500}}  & \multicolumn{1}{c|}{\textbf{50}}  & \multicolumn{1}{c|}{\textbf{500}} & \multicolumn{1}{c|}{\textbf{50}}  & \multicolumn{1}{c|}{\textbf{500}} & \multicolumn{1}{c|}{\textbf{50}}    & \multicolumn{1}{c|}{\textbf{500}}   & \multicolumn{1}{c|}{\textbf{50}}    & \textbf{500}   \\ \hline
\multicolumn{17}{c}{\textbf{Moderate Tasks}}                                                                                                                                                                                                                                                                                                                                                                                                                                                                                                                                                                                 \\ \hline
blocks                                 & \multicolumn{1}{c|}{0.0}          & \multicolumn{1}{c|}{58.4}         & \multicolumn{1}{c|}{20.8}         & \multicolumn{1}{c|}{99.2}         & \multicolumn{1}{c|}{99.2}         & \multicolumn{1}{c|}{93.6}          & \multicolumn{1}{c|}{92.4}          & \multicolumn{1}{c|}{\textbf{100.0}}  & \multicolumn{1}{c|}{0.0}          & \multicolumn{1}{c|}{60.0}         & \multicolumn{1}{c|}{0.0}          & \multicolumn{1}{c|}{99.6}         & \multicolumn{1}{c|}{94.8}           & \multicolumn{1}{c|}{99.6}           & \multicolumn{1}{c|}{100.0}            & 98.8           \\
depot                                  & \multicolumn{1}{c|}{0.0}          & \multicolumn{1}{c|}{0.3}          & \multicolumn{1}{c|}{20.3}         & \multicolumn{1}{c|}{12.3}         & \multicolumn{1}{c|}{59.7}         & \multicolumn{1}{c|}{61.3}          & \multicolumn{1}{c|}{\textbf{79.3}} & \multicolumn{1}{c|}{\textbf{79.3}} & \multicolumn{1}{c|}{2.3}          & \multicolumn{1}{c|}{0.0}            & \multicolumn{1}{c|}{0.7}          & \multicolumn{1}{c|}{1.7}          & \multicolumn{1}{c|}{34.3}           & \multicolumn{1}{c|}{43.7}           & \multicolumn{1}{c|}{69.3}           & 51             \\
grid                                   & \multicolumn{1}{c|}{0.0}          & \multicolumn{1}{c|}{0.0}          & \multicolumn{1}{c|}{0.0}          & \multicolumn{1}{c|}{1.0}          & \multicolumn{1}{c|}{21.0}         & \multicolumn{1}{c|}{\textbf{84.0}} & \multicolumn{1}{c|}{50.0}          & \multicolumn{1}{c|}{40.0}          & \multicolumn{1}{c|}{0.0}          & \multicolumn{1}{c|}{0.0}          & \multicolumn{1}{c|}{0.0}          & \multicolumn{1}{c|}{5.0}          & \multicolumn{1}{c|}{16.0}           & \multicolumn{1}{c|}{9.0}            & \multicolumn{1}{c|}{1.0}            & 48             \\
npuzzle                                & \multicolumn{1}{c|}{0.0}          & \multicolumn{1}{c|}{0.0}          & \multicolumn{1}{c|}{0.0}          & \multicolumn{1}{c|}{0.0}          & \multicolumn{1}{c|}{0.2}          & \multicolumn{1}{c|}{28.8}          & \multicolumn{1}{c|}{1.2}           & \multicolumn{1}{c|}{28.8}          & \multicolumn{1}{c|}{0.0}          & \multicolumn{1}{c|}{0.0}          & \multicolumn{1}{c|}{0.0}          & \multicolumn{1}{c|}{0.0}          & \multicolumn{1}{c|}{0.0}            & \multicolumn{1}{c|}{17.2}           & \multicolumn{1}{c|}{1.2}            & \textbf{34.8}  \\
pipesworld-nt                          & \multicolumn{1}{c|}{11.8}         & \multicolumn{1}{c|}{15.8}         & \multicolumn{1}{c|}{17.6}         & \multicolumn{1}{c|}{23.6}         & \multicolumn{1}{c|}{28.6}         & \multicolumn{1}{c|}{41.4}          & \multicolumn{1}{c|}{27.2}          & \multicolumn{1}{c|}{\textbf{48}}   & \multicolumn{1}{c|}{21.6}         & \multicolumn{1}{c|}{26}           & \multicolumn{1}{c|}{31.4}         & \multicolumn{1}{c|}{23.4}         & \multicolumn{1}{c|}{27}             & \multicolumn{1}{c|}{41}             & \multicolumn{1}{c|}{30}             & 39             \\
rovers                                 & \multicolumn{1}{c|}{6.5}          & \multicolumn{1}{c|}{6.2}          & \multicolumn{1}{c|}{6.5}          & \multicolumn{1}{c|}{7.2}          & \multicolumn{1}{c|}{11.5}         & \multicolumn{1}{c|}{3}             & \multicolumn{1}{c|}{12.5}          & \multicolumn{1}{c|}{\textbf{12.8}} & \multicolumn{1}{c|}{6.5}          & \multicolumn{1}{c|}{6.2}          & \multicolumn{1}{c|}{4.5}          & \multicolumn{1}{c|}{5.2}          & \multicolumn{1}{c|}{7.5}            & \multicolumn{1}{c|}{1.8}            & \multicolumn{1}{c|}{10}             & 8.5            \\
scanalyzer                             & \multicolumn{1}{c|}{\textbf{100.0}} & \multicolumn{1}{c|}{65.3}         & \multicolumn{1}{c|}{\textbf{100.0}} & \multicolumn{1}{c|}{87}           & \multicolumn{1}{c|}{\textbf{100.0}} & \multicolumn{1}{c|}{\textbf{100.0}}  & \multicolumn{1}{c|}{\textbf{100.0}}  & \multicolumn{1}{c|}{\textbf{100.0}}  & \multicolumn{1}{c|}{\textbf{100.0}} & \multicolumn{1}{c|}{\textbf{100.0}} & \multicolumn{1}{c|}{\textbf{100.0}} & \multicolumn{1}{c|}{\textbf{100.0}} & \multicolumn{1}{c|}{\textbf{100.0}} & \multicolumn{1}{c|}{\textbf{100.0}} & \multicolumn{1}{c|}{\textbf{100.0}} & \textbf{100.0} \\
storage                                & \multicolumn{1}{c|}{0.0}            & \multicolumn{1}{c|}{0.0}            & \multicolumn{1}{c|}{12}           & \multicolumn{1}{c|}{0.0}            & \multicolumn{1}{c|}{24.5}         & \multicolumn{1}{c|}{7}             & \multicolumn{1}{c|}{24}            & \multicolumn{1}{c|}{0.0}             & \multicolumn{1}{c|}{1}            & \multicolumn{1}{c|}{0.0}            & \multicolumn{1}{c|}{7}            & \multicolumn{1}{c|}{0.0}            & \multicolumn{1}{c|}{8.0}            & \multicolumn{1}{c|}{0.0}            & \multicolumn{1}{c|}{29}             & \textbf{35.5}  \\
transport                              & \multicolumn{1}{c|}{8.2}          & \multicolumn{1}{c|}{0.0}            & \multicolumn{1}{c|}{11.5}         & \multicolumn{1}{c|}{42.8}         & \multicolumn{1}{c|}{98.8}         & \multicolumn{1}{c|}{97}            & \multicolumn{1}{c|}{99.2}          & \multicolumn{1}{c|}{\textbf{100.0}}  & \multicolumn{1}{c|}{0.0}            & \multicolumn{1}{c|}{0.0}            & \multicolumn{1}{c|}{1.8}          & \multicolumn{1}{c|}{32.8}         & \multicolumn{1}{c|}{49}             & \multicolumn{1}{c|}{70.8}           & \multicolumn{1}{c|}{62.8}           & 84.8           \\
visitall                               & \multicolumn{1}{c|}{0.0}            & \multicolumn{1}{c|}{13}           & \multicolumn{1}{c|}{4.3}          & \multicolumn{1}{c|}{94}           & \multicolumn{1}{c|}{86.7}         & \multicolumn{1}{c|}{92}            & \multicolumn{1}{c|}{92}            & \multicolumn{1}{c|}{\textbf{100.0}}  & \multicolumn{1}{c|}{0.0}            & \multicolumn{1}{c|}{5}            & \multicolumn{1}{c|}{1.7}          & \multicolumn{1}{c|}{79}           & \multicolumn{1}{c|}{17.3}           & \multicolumn{1}{c|}{80.7}           & \multicolumn{1}{c|}{26.7}           & 99             \\ \hline
\textbf{Average}                       & \multicolumn{1}{c|}{12.7}         & \multicolumn{1}{c|}{15.9}         & \multicolumn{1}{c|}{19.3}         & \multicolumn{1}{c|}{36.7}         & \multicolumn{1}{c|}{53.0}         & \multicolumn{1}{c|}{60.8}          & \multicolumn{1}{c|}{57.8}          & \multicolumn{1}{c|}{\textbf{60.9}} & \multicolumn{1}{c|}{13.1}         & \multicolumn{1}{c|}{19.7}         & \multicolumn{1}{c|}{14.7}         & \multicolumn{1}{c|}{34.7}         & \multicolumn{1}{c|}{35.4}           & \multicolumn{1}{c|}{46.4}           & \multicolumn{1}{c|}{43.0}           & 59.9           \\ \hline
\multicolumn{17}{c}{\textbf{Hard Tasks}}                                                                                                                                                                                                                                                                                                                                                                                                                                                                                                                                                                                     \\ \hline
blocks                                 & \multicolumn{1}{c|}{0.0}            & \multicolumn{1}{c|}{0.0}            & \multicolumn{1}{c|}{0.0}            & \multicolumn{1}{c|}{18.4}         & \multicolumn{1}{c|}{41.6}         & \multicolumn{1}{c|}{\textbf{44.4}} & \multicolumn{1}{c|}{34.4}          & \multicolumn{1}{c|}{34.4}          & \multicolumn{1}{c|}{0.0}            & \multicolumn{1}{c|}{0.0}            & \multicolumn{1}{c|}{0.0}            & \multicolumn{1}{c|}{12.8}         & \multicolumn{1}{c|}{27.6}           & \multicolumn{1}{c|}{42}             & \multicolumn{1}{c|}{24}             & 41.2           \\
depot                                  & \multicolumn{1}{c|}{0.0}            & \multicolumn{1}{c|}{0.0}            & \multicolumn{1}{c|}{0.0}            & \multicolumn{1}{c|}{0.0}            & \multicolumn{1}{c|}{6}            & \multicolumn{1}{c|}{8}             & \multicolumn{1}{c|}{\textbf{13.1}} & \multicolumn{1}{c|}{8.3}           & \multicolumn{1}{c|}{0.0}            & \multicolumn{1}{c|}{0.0}            & \multicolumn{1}{c|}{0.0}            & \multicolumn{1}{c|}{0.0}            & \multicolumn{1}{c|}{0.3}            & \multicolumn{1}{c|}{5.4}            & \multicolumn{1}{c|}{4.9}            & 10.6           \\
grid                                   & \multicolumn{1}{c|}{0.0}            & \multicolumn{1}{c|}{0.0}            & \multicolumn{1}{c|}{0.0}            & \multicolumn{1}{c|}{0.0}            & \multicolumn{1}{c|}{0.0}            & \multicolumn{1}{c|}{\textbf{8}}    & \multicolumn{1}{c|}{0.2}           & \multicolumn{1}{c|}{0.8}           & \multicolumn{1}{c|}{0.0}            & \multicolumn{1}{c|}{0.0}            & \multicolumn{1}{c|}{0.0}            & \multicolumn{1}{c|}{0.0}            & \multicolumn{1}{c|}{0.0}              & \multicolumn{1}{c|}{0.0}              & \multicolumn{1}{c|}{0.0}              & 5.2            \\
npuzzle                                & \multicolumn{1}{c|}{0.0}            & \multicolumn{1}{c|}{0.0}            & \multicolumn{1}{c|}{0.0}            & \multicolumn{1}{c|}{0.0}            & \multicolumn{1}{c|}{0.0}            & \multicolumn{1}{c|}{0.0}             & \multicolumn{1}{c|}{0.0}             & \multicolumn{1}{c|}{0.0}             & \multicolumn{1}{c|}{0.0}            & \multicolumn{1}{c|}{0.0}            & \multicolumn{1}{c|}{0.0}            & \multicolumn{1}{c|}{0.0}            & \multicolumn{1}{c|}{0.0}              & \multicolumn{1}{c|}{0.0}              & \multicolumn{1}{c|}{0.0}              & 0.0              \\
pipesworld-nt                          & \multicolumn{1}{c|}{0.1}          & \multicolumn{1}{c|}{0.1}          & \multicolumn{1}{c|}{0.8}          & \multicolumn{1}{c|}{4.5}          & \multicolumn{1}{c|}{5.3}          & \multicolumn{1}{c|}{8.6}           & \multicolumn{1}{c|}{5.5}           & \multicolumn{1}{c|}{12.1}          & \multicolumn{1}{c|}{2.2}          & \multicolumn{1}{c|}{5.4}          & \multicolumn{1}{c|}{4.4}          & \multicolumn{1}{c|}{8.1}          & \multicolumn{1}{c|}{4.8}            & \multicolumn{1}{c|}{\textbf{12.4}}  & \multicolumn{1}{c|}{8.8}            & 8.8            \\
rovers                                 & \multicolumn{1}{c|}{0.0}            & \multicolumn{1}{c|}{0.0}            & \multicolumn{1}{c|}{0.0}            & \multicolumn{1}{c|}{0.0}            & \multicolumn{1}{c|}{0.0}            & \multicolumn{1}{c|}{0.0}             & \multicolumn{1}{c|}{0.0}             & \multicolumn{1}{c|}{\textbf{0.1}}  & \multicolumn{1}{c|}{0.0}            & \multicolumn{1}{c|}{0.0}            & \multicolumn{1}{c|}{0.0}            & \multicolumn{1}{c|}{0.0}            & \multicolumn{1}{c|}{0.0}              & \multicolumn{1}{c|}{0.0}              & \multicolumn{1}{c|}{0.0}              & 0.0              \\
scanalyzer                             & \multicolumn{1}{c|}{48}           & \multicolumn{1}{c|}{34}           & \multicolumn{1}{c|}{83.3}         & \multicolumn{1}{c|}{66}           & \multicolumn{1}{c|}{\textbf{100.0}} & \multicolumn{1}{c|}{96.7}          & \multicolumn{1}{c|}{\textbf{100.0}}  & \multicolumn{1}{c|}{\textbf{100.0}}  & \multicolumn{1}{c|}{66.7}         & \multicolumn{1}{c|}{54}           & \multicolumn{1}{c|}{98.7}         & \multicolumn{1}{c|}{81.3}         & \multicolumn{1}{c|}{\textbf{100.0}}   & \multicolumn{1}{c|}{70.7}           & \multicolumn{1}{c|}{\textbf{100.0}}   & \textbf{100.0}   \\
storage                                & \multicolumn{1}{c|}{0.0}            & \multicolumn{1}{c|}{0.0}            & \multicolumn{1}{c|}{0.0}            & \multicolumn{1}{c|}{0.0}            & \multicolumn{1}{c|}{6.8}          & \multicolumn{1}{c|}{8}             & \multicolumn{1}{c|}{8.2}           & \multicolumn{1}{c|}{0.0}             & \multicolumn{1}{c|}{0.0}            & \multicolumn{1}{c|}{0.0}            & \multicolumn{1}{c|}{2.5}          & \multicolumn{1}{c|}{3}            & \multicolumn{1}{c|}{0.8}            & \multicolumn{1}{c|}{0.2}            & \multicolumn{1}{c|}{2}              & \textbf{12.5}  \\
transport                              & \multicolumn{1}{c|}{0.0}            & \multicolumn{1}{c|}{0.0}            & \multicolumn{1}{c|}{0.0}            & \multicolumn{1}{c|}{0.0}            & \multicolumn{1}{c|}{0.0}            & \multicolumn{1}{c|}{3.6}           & \multicolumn{1}{c|}{0.0}             & \multicolumn{1}{c|}{\textbf{14}}   & \multicolumn{1}{c|}{0.0}            & \multicolumn{1}{c|}{0.0}            & \multicolumn{1}{c|}{0.0}            & \multicolumn{1}{c|}{0.0}            & \multicolumn{1}{c|}{0.0}              & \multicolumn{1}{c|}{0.0}              & \multicolumn{1}{c|}{0.0}              & 0.8            \\
visitall                               & \multicolumn{1}{c|}{0.0}            & \multicolumn{1}{c|}{0.0}            & \multicolumn{1}{c|}{0.0}            & \multicolumn{1}{c|}{10}           & \multicolumn{1}{c|}{0.0}            & \multicolumn{1}{c|}{6}             & \multicolumn{1}{c|}{8}             & \multicolumn{1}{c|}{\textbf{100.0}}  & \multicolumn{1}{c|}{0.0}            & \multicolumn{1}{c|}{0.0}            & \multicolumn{1}{c|}{0.0}            & \multicolumn{1}{c|}{2}            & \multicolumn{1}{c|}{0.0}              & \multicolumn{1}{c|}{44}             & \multicolumn{1}{c|}{0.0}              & \textbf{100}   \\ \hline
\textbf{Average}                       & \multicolumn{1}{c|}{4.81}         & \multicolumn{1}{c|}{3.41}         & \multicolumn{1}{c|}{8.41}         & \multicolumn{1}{c|}{9.89}         & \multicolumn{1}{c|}{15.97}        & \multicolumn{1}{c|}{18.33}         & \multicolumn{1}{c|}{16.94}         & \multicolumn{1}{c|}{26.97}         & \multicolumn{1}{c|}{6.89}         & \multicolumn{1}{c|}{5.94}         & \multicolumn{1}{c|}{10.56}        & \multicolumn{1}{c|}{10.72}        & \multicolumn{1}{c|}{13.35}          & \multicolumn{1}{c|}{17.47}          & \multicolumn{1}{c|}{13.97}          & \textbf{27.91}
\end{tabular}
    \vspace{-0.07in}
    \caption{Comparison of the coverage given a 6 minute planning time budget of $h$\textsuperscript{RSL} using a range of different hyper-parameter values. Each run uses only a single trial and is evaluated over 10 of the 50 initial states for each instance in the benchmark set.}
    \label{tab:grid_baselines}
\end{table*}

\end{document}